\newcommand{\R}{\mathbb{R}}
\newtheorem{theorem}{Theorem}
\newtheorem{proposition}{Proposition}
\theoremstyle{definition}
\newtheorem{definition}{Definition}
\theoremstyle{remark}
\theoremstyle{plain}
\DeclareMathOperator{\sgn}{sgn}
\title{A novel approach to graph distinction through GENEOs and permutants}
\author{
  Giovanni Bocchi \\
  Department of Environmental Science and Policy\\
  University of Milan\\
  Milan, MI 20133 \\
  \texttt{giovanni.bocchi1@unimi.it} \\
  \And
  Massimo Ferri \\
  Department of Mathematics and ARCES\\
  University of Bologna\\
  Bologna, BO 40126 \\
  \texttt{massimo.ferri@unibo.it} \\
  \And
  Patrizio Frosini \\
  Department of Mathematics and ARCES\\
  University of Bologna\\
  Bologna, BO 40126 \\
  \texttt{patrizio.frosini@unibo.it} \\
}
\begin{document}

\maketitle

\begin{abstract}
The theory of Group Equivariant Non-Expansive Operators (GENEOs) was initially developed in Topological Data Analysis for the geometric approximation of data observers, including their invariances and symmetries. This paper departs from that line of research and explores the use of GENEOs for distinguishing $r$-regular graphs up to isomorphisms. In doing so, we aim to test the capabilities and flexibility of these operators. Our experiments show that GENEOs offer a good compromise between efficiency and computational cost in comparing $r$-regular graphs, while their actions on data are easily interpretable. This supports the idea that GENEOs could be a general-purpose approach to discriminative problems in Machine Learning when some structural information about data and observers is explicitly given. 
\end{abstract}

\section*{Introduction}
Explainable machine learning models have recently emerged as an important part of the research in artificial intelligence and aim at devising methods and techniques that are understandable for humans~\cite{Ru19,Carrieri2021,Hicks2021}.
In this field, the use of concepts from topology and geometry has enabled developments that promise to make machine learning more easily interpretable, as required in many critical applications, where security and reliability are crucial elements. 
The research about group equivariant non-expansive operators (GENEOs) fits into this scientific context, offering the possibility of building small networks of operators that process the available information in a transparent and easily controllable way~\cite{BeFrGiQu19,Mi23,Boeal24}.

GENEOs have their roots in Topological Data Analysis and make available a mathematical theory for the approximation of observers, including their symmetries and shifting the attention from the data alone to the pairs \emph{(data, observer)}, seen as the main object of study.
This change of perspective is justified by the fact that in many applications, the interest is not directly focused on data, but on approximating the experts' behavior in the presence of some given information~\cite{Fr16}.
It is indeed well known that different agents can react in completely different ways to the presence of the same data, and this implies that data comparison cannot be separated from the problem of understanding observers' characteristics and preferences. 
We stress that an approach to data comparison based on geometrical operators offers various advantages. First, it allows us to translate the study of the ``shape of data'' into a precise mathematical problem concerning the approximation of observers. Secondly, it benefits from sound techniques and procedures that come from TDA, the area in which the GENEO theory was born.
Thirdly, it makes available a modular setting, and a great reduction in the number of components and parameters to be managed.

In the theory of GENEOs, data are represented by real-valued or vector-valued functions. GENEOs transform data belonging to a given functional space $\Phi$ into data belonging to another functional space $\Psi$, and their definition is based on two mathematical properties: these operators are equivariant for the action of a given transformation group $G$ and do not increase the distance between data. The first property means that GENEOs commute with each transformation in $G$. As a simple example, we can think of the blurring operator, acting on grey-level images seen as functions from $\R^2$ to $[0,1]$: if the blurring is performed by a convolution using a rotationally symmetric kernel $h:\R^2\to \R$ with $\int_{\R^2} |h(p)|\ dp\le 1$, and $G$ is the group of isometries of the real plane, then 
the blurring commutes with every transformation $g\in G$. This means that GENEOs respect data symmetries. 
We remind here that equivariance has taken a key role in the research about Deep Learning, making explicit the use of symmetries and consequently reducing the number of parameters required in the learning process~\cite{Ma12,BeCoVi13,ZVERP15,Ma16,AnRoPo16,cohen2016group,worrall2017harmonic,AERP19,gerken2023}.
The second property concerns the metric relation between the input and output of the operator: it guarantees that the distance between the input data is greater or equal to the distance between the output data. In other words, GENEOs simplify the information, in a metric sense.

Each GENEO space $\mathcal{F}$ benefits from good mathematical properties, such as compactness and convexity, when suitable assumptions are made on the space of data and appropriate topologies are chosen~\cite{BeFrGiQu19}. In particular, compactness guarantees that for every $\varepsilon>0$, we can find a finite set
$\mathcal{F}'=\{F_1,\ldots,F_s\}\subseteq \mathcal{F}$ such that any operator in $\mathcal{F}$ has a distance less than $\varepsilon$
from at least one $F_i$. In other words, GENEO spaces can be finitely approximated with arbitrary precision.

Since graphs can be represented as adjacency matrices and adjacency matrices can be interpreted as functions, GENEOs can be applied to compare graphs. In this framework, a technique for building GENEOs based on the concept of permutant has been recently extended to graphs~\cite{genperm2023}. This concept can be introduced when two data spaces $\Phi$, $\Psi$ and two transformation groups $G,K$ are given, respectively acting on the domain $X$ of the functions in $\Phi$ and the domain $Y$ of the functions in $\Psi$, together with a group homomorphism $T:G\to K$. Under these assumptions, a \emph{permutant} is defined as a finite set $H$ of maps from $Y$ to $X$ such that $ g\circ h\circ T\left(g^{-1}\right)\in H$ for any $h\in H$.
For each permutant $H$, it is possible to obtain a GENEO $F:\Phi\to\Psi$ by defining 
$F(\varphi)=\frac{1}{\mathrm{card\ }(H)} \sum_{h\in H} \varphi\circ h$~\cite{genperm2023}.
When the cardinality of $H$ is small, this construction method is particularly advantageous and fast since it does not require any expensive integration over the group $G$.

This paper explores the use of permutants to produce efficient GENEOs for distinguishing $r$-regular graphs (i.e., graphs where every node has a degree
equal to $r$) up to isomorphisms. In doing so, we aim to test the capabilities and flexibility of GENEOs, checking to what extent these operators can be used for data comparison.

Our experiments show that GENEOs offer a good compromise between efficiency and computational cost in comparing $r$-regular graphs, while their actions on data are easily interpretable.
In other words, using these operators could be a good general alternative to efficient but computationally expensive analysis methods. This further supports the idea that GENEOs could be a general-purpose approach to discriminative problems in Machine Learning when some structural information about data and observers is explicitly given.

\section*{Results}
The methodology outlined in the Methods section enabled us to generate a model for assessing graph non-isomorphism. Following a pruning process of an extensive initial pool of operators (sampled to ensure adequate exploration of operators built using permutants), the resulting architecture, as illustrated in Figure~\ref{fig:model}, is presented. The network was built adhering to established practices in machine learning, despite not possessing learnable parameters at this stage in a way such that simplicity and transparency are emphasized. Furthermore, the results shown in Figure~\ref{fig:time} and Table~\ref{tab:degree3} prove that, with respect to performance, our model is not falling behind other methods with comparable technical complexities.

\begin{figure}[ht]
\centering
\includegraphics[width=0.7\linewidth]{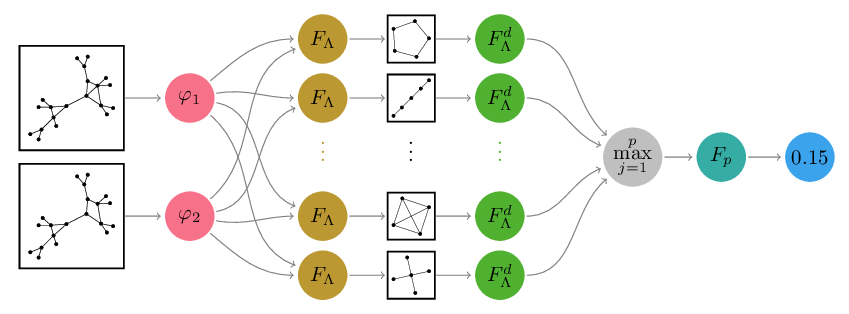}
\caption{Model architecture after the forward selection algorithm.}
\label{fig:model}
\end{figure}

Specifically, the contrast between $k$-WL tests brings into focus the indecisiveness of 1-WL and 2-WL, which are equivalent in terms of expressive power, when applied to $r$-regular graphs. On the other hand, 3-WL, albeit being more expressive than 1-WL, encounters significant computational challenges for large graphs. Due to the theoretical equivalence of expressivity between 1-WL and message passing GNNs, these observations can be extended to GNNs as well. In addition, GNNs, as the number of layers and parameters increase, quickly become opaque and lose explainability. Finally, considering the higher costs of $k$-WL, we expect a similar behavior for $k$-GNNs.

Regarding the other approaches, they are the most efficient in terms of computational costs but not as accurate, especially for $r$-regular graphs. This was foreseeable since they are fairly general methods without specificity for this class of graphs. Notably, NTX-COULD (which examines node degree distribution, triangle and maximal clique counts) may offer a reasonable balance between time efficiency and accuracy, as shown in Figure~\ref{fig:time}. However, unlike other methods, including those based on GENEOs, NTX-COULD does not scale linearly with respect to the graph dimension. Lastly, it is important to note that NTX-IS, which is the only exact algorithm under consideration, suffers significantly in the comparison due to its substantially longer execution times, even for moderately sized graphs (as illustrated in Table~\ref{tab:degree3}).

\section*{Discussion}
In this work, we proposed a methodology for testing the isomorphism of graphs through the application of Group Equivariant Non-Expansive Operators and permutants. Our approach is grounded in the framework of permutants, which enable the generation of efficient and effective GENEOs to map between the sets of functions representing unweighted and undirected graphs.

The Methods section provides a detailed exposition of the fundamental principles of permutants and their role in constructing GENEOs. Furthermore, we demonstrate how recently introduced permutants, namely Subgraph Permutants, can be employed to derive GENEOs for comparing pairs of graphs, thereby allowing for the construction of a network that considers various aspects relevant to the final decision.

In the Results section, we implement our proposed methodology for distinguishing between $r$-regular graphs and report numerical results that demonstrate its success in differentiating among various instances of these graphs. Notably, our method also exhibits competitive performance in terms of execution time, as evidenced by a time complexity that scales linearly with the dimensionality of the input. These findings on $r$-regular graphs are significant given their challenges to many existing state-of-the-art isomorphism tests, as illustrated in our comparative analysis.

Looking ahead, we plan to expand this analysis to accommodate weighted graphs and devise an isomorphism-based graph similarity computing network. This could potentially lead to more general graph comparison models that are less stringent in their requirement for isomorphism when assessing graph similarity. The extension requires careful consideration of appropriate permutant constructions for weighted graphs and corresponding GENEOs, which we intend to explore in future works.

\section*{Methods}
\subsection*{GENEOs and Permutants}

We will now briefly revisit the formal framework of GENEOs (Group Equivariant Non-Expansive Operators) and discuss permutants as a practical means to construct them. For a comprehensive understanding of GENEOs, please refer to~\cite{BeFrGiQu19}, and for further information on permutants, see~\cite{genperm2023,newmet2018,BoBoBrFrQu23}. We shall simply call ``permutant'' what is defined as ``generalized permutant'' in~ \cite{genperm2023}.

Let $X, Y$ be non-empty sets; let $\Phi$, $\Psi$ be subspaces, respectively, of
\[\mathbb{R}_b^X = \{\varphi:X \to \mathbb{R} \,|\, \varphi \ \ \text{is bounded}\} \ \ \ \ \ \mathbb{R}_b^Y = \{\psi:Y \to \mathbb{R} \,|\, \psi \ \ \text{is bounded}\}\]
both endowed with the $L^\infty$ norm. Let $G$ (resp. $K$) be a group of $\Phi$-preserving (resp. $\Psi$-preserving) bijections of $X$ (resp. $Y$) to itself. Finally, let $T:G \to K$ be a group homomorphism.

\begin{definition}[Group Equivariant Non-Expansive Operator]
    \label{def:geneo}
    A GENEO from the pair $(\Phi, G)$ to the pair $(\Psi, K)$, w.r.t. the homomorphism $T$, is a map $F:\Phi \to \Psi$ satisfying:
    \begin{enumerate}
        \item \textbf{(Equivariance)} $F(\varphi \circ g) = F(\varphi) \circ T(g)$ for every $\varphi \in \Phi$ and $g \in G$.
        \item \textbf{(Non-Expansiveness)} $\| F(\varphi_1) - F(\varphi_2)\|_{\infty} \le \|\varphi_1 - \varphi_2\|_{\infty}$ for every $\varphi_1, \varphi_2 \in \Phi$.
    \end{enumerate}
\end{definition}

Given two pairs $(\Phi, G)$ and $(\Psi, K)$ and a homomorphism $T\colon G \to K$, a (generalized) permutant can be defined as follows:

\begin{definition}
    \label{def:genperm}
    A finite subset $H \subseteq X^Y$ is a permutant for the homomorphism $T$ if either $H =\emptyset$ or $\{g\circ h \circ T(g^{-1})\ | \  h \in H\} \subseteq H$ for every $g \in G$.
\end{definition}

Permutants provide an easy way to construct GENEOs:

\begin{proposition}
    \label{prop:geneogenperm}
    If $H \subseteq X^Y$ is a non-empty permutant for $T \colon G \to K$ then the operator $F_H:\Phi \to \mathbb{R}^Y$  defined as 
    \begin{equation}
        \label{eqn:geneogenperm}
        F_H(\varphi) = \frac{1}{C} \sum_{h\in H} \varphi \circ h
    \end{equation}
    is a GENEO from $(\Phi, G)$ to $(\Psi, K)$, provided that $F_H(\Phi) \subseteq \Psi$ and that $C \ge |H|$.
\end{proposition}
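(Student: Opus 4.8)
The plan is to verify directly the two defining properties of a GENEO from Definition~\ref{def:geneo}, namely equivariance and non-expansiveness, for the operator $F_H$. The hypotheses $F_H(\Phi)\subseteq\Psi$ and $C\ge|H|$ are exactly what is needed to make the codomain and the contraction estimate work out, so the real content lies in exploiting the permutant condition in the equivariance step.

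For equivariance, I would first observe that the permutant inclusion $\{g\circ h\circ T(g^{-1})\mid h\in H\}\subseteq H$ can be upgraded to an equality. Indeed, for fixed $g\in G$ the map $\sigma_g\colon h\mapsto g\circ h\circ T(g^{-1})$ is injective on $X^Y$ (compose with $g^{-1}$ on the left and $T(g)$ on the right to recover $h$), so it sends the finite set $H$ injectively into itself and is therefore a bijection of $H$ onto itself. Next, using that $T$ is a homomorphism so that $T(g^{-1})\circ T(g)=T(e)=\mathrm{id}_Y$, I would rewrite $g\circ h=\sigma_g(h)\circ T(g)$ and compute
\begin{equation*}
F_H(\varphi\circ g)=\frac{1}{C}\sum_{h\in H}\varphi\circ g\circ h=\frac{1}{C}\sum_{h\in H}\varphi\circ\sigma_g(h)\circ T(g).
\end{equation*}
Since $\sigma_g$ permutes $H$, reindexing the sum by $h'=\sigma_g(h)$ gives $\frac{1}{C}\sum_{h'\in H}\varphi\circ h'\circ T(g)=F_H(\varphi)\circ T(g)$, which is precisely the equivariance identity.

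For non-expansiveness, I would estimate pointwise. For any $y\in Y$, linearity of the defining sum gives $F_H(\varphi_1)(y)-F_H(\varphi_2)(y)=\frac{1}{C}\sum_{h\in H}(\varphi_1-\varphi_2)(h(y))$; applying the triangle inequality and bounding each summand by $\|\varphi_1-\varphi_2\|_\infty$ (since every $h(y)\in X$) yields $|F_H(\varphi_1)(y)-F_H(\varphi_2)(y)|\le\frac{|H|}{C}\|\varphi_1-\varphi_2\|_\infty$. Taking the supremum over $y\in Y$ and using $C\ge|H|$, so that $|H|/C\le1$, completes this part.

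The main obstacle, and really the only subtle point, is the passage from the subset inclusion in the definition of a permutant to the bijectivity of $\sigma_g$ on $H$: without it, the reindexing step in the equivariance computation would fail, because some terms $\varphi\circ h'\circ T(g)$ could be missing from the sum. Everything else is a routine manipulation of compositions and of the supremum norm, relying on $T$ being a homomorphism and on the two stated provisos.
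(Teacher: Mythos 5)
Your proof is correct, and it fills in what the paper itself leaves out: Proposition~\ref{prop:geneogenperm} is stated without proof, deferring to~\cite{genperm2023}, and your argument is exactly the standard one given there. In particular, you correctly identify the one non-trivial step---upgrading the permutant inclusion $\{g\circ h\circ T(g^{-1})\mid h\in H\}\subseteq H$ to bijectivity of $\sigma_g$ on the finite set $H$ via injectivity of conjugation, which licenses the reindexing in the equivariance computation---while the non-expansiveness estimate via the triangle inequality and $C\ge|H|$, and the use of the proviso $F_H(\Phi)\subseteq\Psi$ for the codomain, are handled exactly as they should be.
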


Proposition~\ref{prop:geneogenperm} moreover shows that GENEOs obtained through permutants can also be efficiently computable: the smaller the permutant the faster the computation.

\subsection*{Graph setting and Subgraph Permutants}
In this section, we explore the application of GENEOs to the field of graphs. From hereon, our focus is on the set of undirected and unweighted simple graphs with a set of nodes $V_N = \{v_1, \ldots, v_N\}$, denoted as $\mathcal{G} = \{\Gamma = (V_\Gamma, E_\Gamma) \ |\ V_\Gamma = V_N\}$. Otherwise said, we consider all graphs of $\mathcal{G}$ as subgraphs of the complete graph $K_N$ on the vertex set $V_N$.

To proceed, let us define the domain as the set of index pairs $X = \{\{v_i, v_j\} \subset V_N \ |\  i\ne j \}$ (corresponding with the edge set of $K_N$), and the function space as $\Phi = \{\varphi \colon X \to \{0,1\}\}$. Each graph $\Gamma \in \mathcal{G}$ can be represented through a corresponding function $\varphi \in \Phi$ that models its adjacency matrix; we will denote this correspondence as $\Gamma \sim \varphi$. It is important to note that graphs with fewer than $N$ nodes can also be accommodated within our framework by adding as many 0-degree nodes as needed, without raising any problem for what is done in the following.

In the experimental section that follows, we will concentrate on addressing the Graph Isomorphism Problem. To this aim, we fix, as the equivariance group $G$, the group of all transformations $g\colon X \to X$ of the form $g(\{v_i, v_j\}) = \{v_{\sigma(i)}, v_{\sigma(j)}\}$, for some permutation $\sigma$ of the set $\{1, \dots, N\}$. Given two graphs $\Gamma_1$ and $\Gamma_2$ in $\Phi$, represented by functions $\varphi_1$ and $\varphi_2$ respectively, we shall deem them isomorphic if there exists a $g\in G$ such that $\varphi_1 = \varphi_2 \circ g$. To devise a test for checking whether two given graphs are isomorphic or not, this binary relationship serves as our ground truth. Effectively, it can be modeled by a well-known concept from the theory of Topological Data Analysis: the natural pseudo-distance~\cite{BeFrGiQu19}.
\begin{definition}
\label{def:natdist}
The natural pseudo-distance associated to a group $L\subseteq G$ is 
    \begin{equation}
    \label{eqn:natsdist}
        d_L(\varphi_1, \varphi_2) = \inf_{g \in L} \| \varphi_1 - \varphi_2 \circ g \|_{\infty}
    \end{equation}
\end{definition}
From this definition, due to the binary nature of the functions in $\Phi$, it easily follows that in our setting:
\begin{equation}
    \begin{aligned}
        \Gamma_1\text{ and }\Gamma_2\text{ are isomorphic }&\iff d_G(\varphi_1, \varphi_2) = 0\\
        \Gamma_1\text{ and }\Gamma_2\text{ are not isomorphic }&\iff d_G(\varphi_1, \varphi_2) = 1
    \end{aligned}
\end{equation}

We are now prepared to introduce Subgraph Permutants (SP). Given an auxiliary graph $\Lambda = (V_\Lambda, E_\Lambda)$, with $V_\Lambda = V_k =\{w_1, \ldots, w_k\}$, where $k \ll N$  (seen as a subgraph of the smaller complete graph $K_k$), we can represent it similarly as before in the functional space $\Psi_0 = \{ \psi_0\colon Y \to \{0,1\} \}$, where $Y = \{\{w_i, w_j\} \subset V_k \ | \ i\ne j \}\}$. Before proceeding with the definition, let us fix some notation that will be utilized subsequently. As previously stated, the group $G$ denotes the set of all graph transformations induced by a permutation of the $N$ nodes (i.e. self-isomorphisms of the complete graph $K_N$); in contrast, $G_\Gamma$ and $G_\Lambda$ refer to the groups of self-isomorphisms of graphs $\Gamma$ and $\Lambda$ respectively, i.e. those transformations of $K_N$ (resp. $K_k$) that map the subgraph $\Gamma$ (resp. $\Lambda$) back into itself. Furthermore, we define the functional space $\Psi = \{ \psi\colon Y \to [0,1] \}$ which consists of functions defined on $Y$ taking values within the closed interval $[0,1]$.

In this way, we can define:

\begin{definition}[Subgraph Permutant]
    \label{def:SGP}
    A Subgraph Permutant (SP) associated with the search of $\Lambda \sim \psi_0$ as a subgraph of $\Gamma \sim \varphi$ is the set of functions from $Y$ to $X$ of the form:
    \begin{equation}
        \label{eqn:SGP}
        H^\varphi_{\Lambda} = \{ h\colon Y \to X \ |\  (\varphi \circ h)(\{w_i, w_j\}) = 1 \iff \psi_0(\{w_i, w_j\}) = 1\quad\forall \{w_i, w_j\} \in Y, \quad h\text{ is injective}\}
    \end{equation}
\end{definition}

\begin{proposition}
    \label{prop:SGN}
    Every SP is a permutant for the trivial homomorphism $T\colon G_\Gamma \to \{Id_Y\}$.
\end{proposition}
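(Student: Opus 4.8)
The plan is to unwind the permutant condition of Definition~\ref{def:genperm} in this special case and reduce it to a single verification. Since the target group here is trivial, the homomorphism $T$ sends every $g \in G_\Gamma$ to $Id_Y$, so the defining inclusion $\{g \circ h \circ T(g^{-1}) \mid h \in H\} \subseteq H$ collapses to $\{g \circ h \mid h \in H\} \subseteq H$. It therefore suffices to show that for every $g \in G_\Gamma$ and every $h \in H^\varphi_\Lambda$, the composite $g \circ h$ again lies in $H^\varphi_\Lambda$. The other requirement of Definition~\ref{def:genperm}, finiteness of $H^\varphi_\Lambda$, is immediate because $X$ and $Y$ are finite and hence $X^Y$ is finite.

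To check membership of $g\circ h$, I would verify directly the two clauses defining $H^\varphi_\Lambda$ in~\eqref{eqn:SGP}. Injectivity is routine: $g$ is a bijection of $X$ and $h$ is injective by hypothesis, so their composite is injective. The substantive clause is the adjacency-matching requirement $(\varphi \circ (g \circ h))(\{w_i,w_j\}) = 1 \iff \psi_0(\{w_i,w_j\}) = 1$. Rewriting $\varphi \circ (g \circ h) = (\varphi \circ g) \circ h$, I can see that the entire argument hinges on the identity $\varphi \circ g = \varphi$.

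I expect the only real content — and the sole place where the hypothesis $g \in G_\Gamma$ enters — to be establishing that identity. By definition $G_\Gamma$ consists of the self-isomorphisms of $\Gamma$, i.e. the transformations $g$ of $X$ carrying $\Gamma$ onto itself; in functional terms this says exactly that $\varphi(\{v_i,v_j\}) = 1 \iff \varphi(g(\{v_i,v_j\})) = 1$ for all pairs, which is precisely $\varphi \circ g = \varphi$. Because $g$ is a bijection of the finite set $X$, the phrase ``maps $\Gamma$ into itself'' already forces equality of edge sets, so this is a genuine automorphism condition and no extra care is needed. With $\varphi \circ g = \varphi$ in hand, $(\varphi \circ g)\circ h = \varphi \circ h$, so $g \circ h$ inherits the adjacency-matching property from $h \in H^\varphi_\Lambda$, whence $g \circ h \in H^\varphi_\Lambda$. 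This closes the inclusion and proves the proposition. The difficulty here is conceptual rather than computational: one must recognize that the subgraph-embedding description of $H^\varphi_\Lambda$ is stable under pre-composition by automorphisms of $\Gamma$ exactly because such automorphisms leave $\varphi$ invariant, which in turn is what makes the trivial homomorphism the natural choice for $T$.
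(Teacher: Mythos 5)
Your proof is correct and follows essentially the same route as the paper's: reduce the permutant condition to showing $g \circ h \in H^\varphi_\Lambda$ via $T(g^{-1}) = Id_Y$, then use the fact that a self-isomorphism $g \in G_\Gamma$ leaves $\varphi$ invariant, so that $(\varphi \circ g) \circ h = \varphi \circ h$. You additionally spell out details the paper leaves implicit (injectivity of $g \circ h$, finiteness of $H^\varphi_\Lambda$, and why ``maps $\Gamma$ into itself'' forces $\varphi \circ g = \varphi$), which is fine.
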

\begin{proof}
    It must be shown that for every $g \in G_\Gamma$ and every $h \in H^\varphi_\Lambda$ it holds that $g\circ h \circ T(g^{-1}) \in H^\varphi_\Lambda$. But $T(g^{-1}) = Id_{Y}$ hence it is enough to show $g \circ h \in H^\varphi_\Lambda$. 
    Since $g$ is a self-isomorphism of $\Gamma$, the condition $(\varphi \circ g\circ h)(\{w_i, w_j\}) = 1$ is equivalent to the condition $(\varphi \circ h)(\{w_i, w_j\}) = 1$. This proves that $g \circ h \in H^\varphi_\Lambda$.
\end{proof}

We shall call `strict' a map $h$ respecting the condition of Def.~\ref{def:SGP}. As a map from $K_k$ to $K_N$, it maintains all values (0 and 1) from $\psi_0$ to $\varphi$. As an embedding of $\Lambda$ into $\Gamma$, it is such that $h(\Lambda)$ coincides with the subgraph of $\Gamma$ induced by its nodes. See Figure~\ref{fig:stream} for an embedding that is strict (on the right) and one that is not (on the left).

\begin{figure}[ht]
\centering
\begin{subfigure}{0.4\textwidth}
    \includegraphics[width=\textwidth]{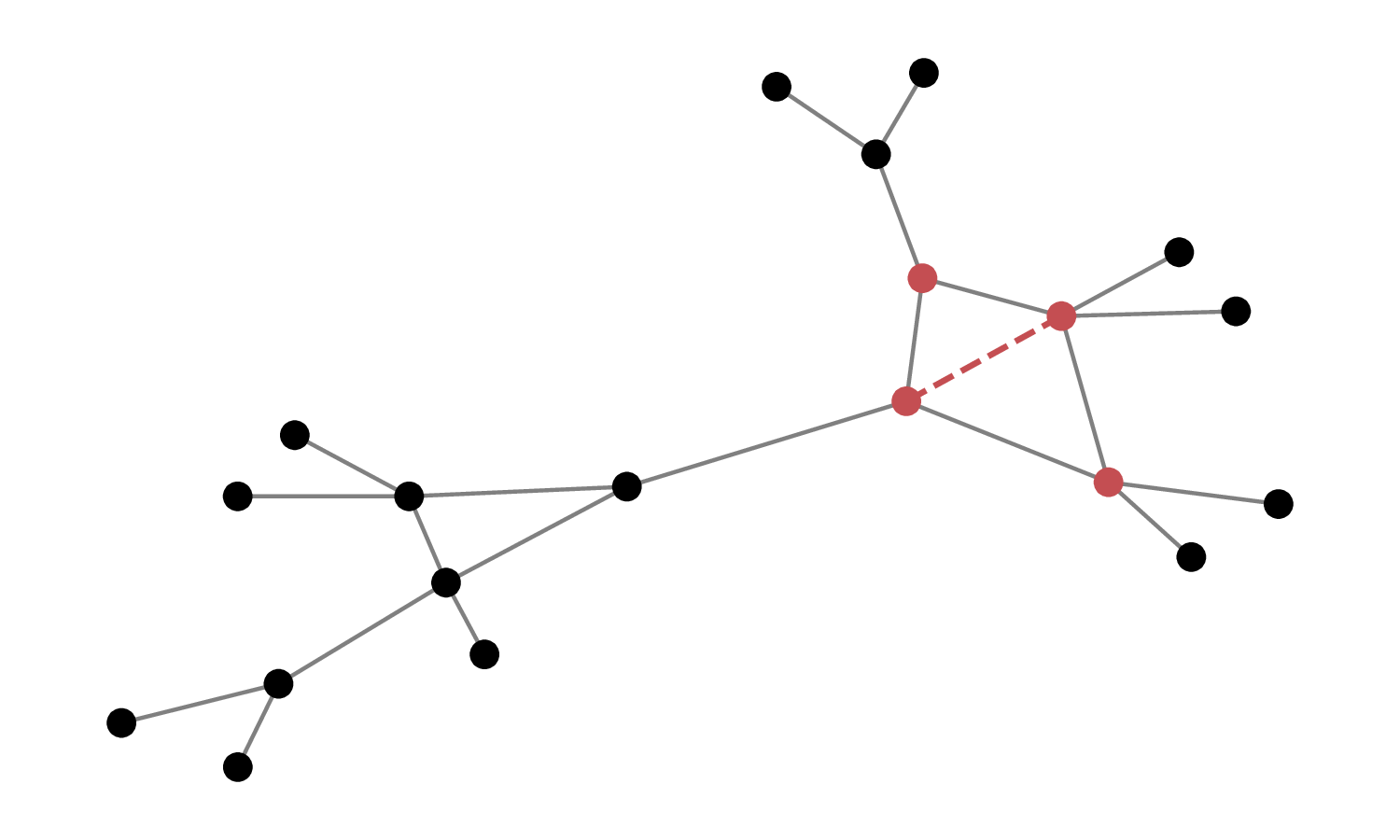}
    \caption{$\Gamma_1\quad H^{\varphi_1}_\Lambda = \emptyset$}
    \label{fig:first}
\end{subfigure}
\begin{subfigure}{0.4\textwidth}
    \includegraphics[width=\textwidth]{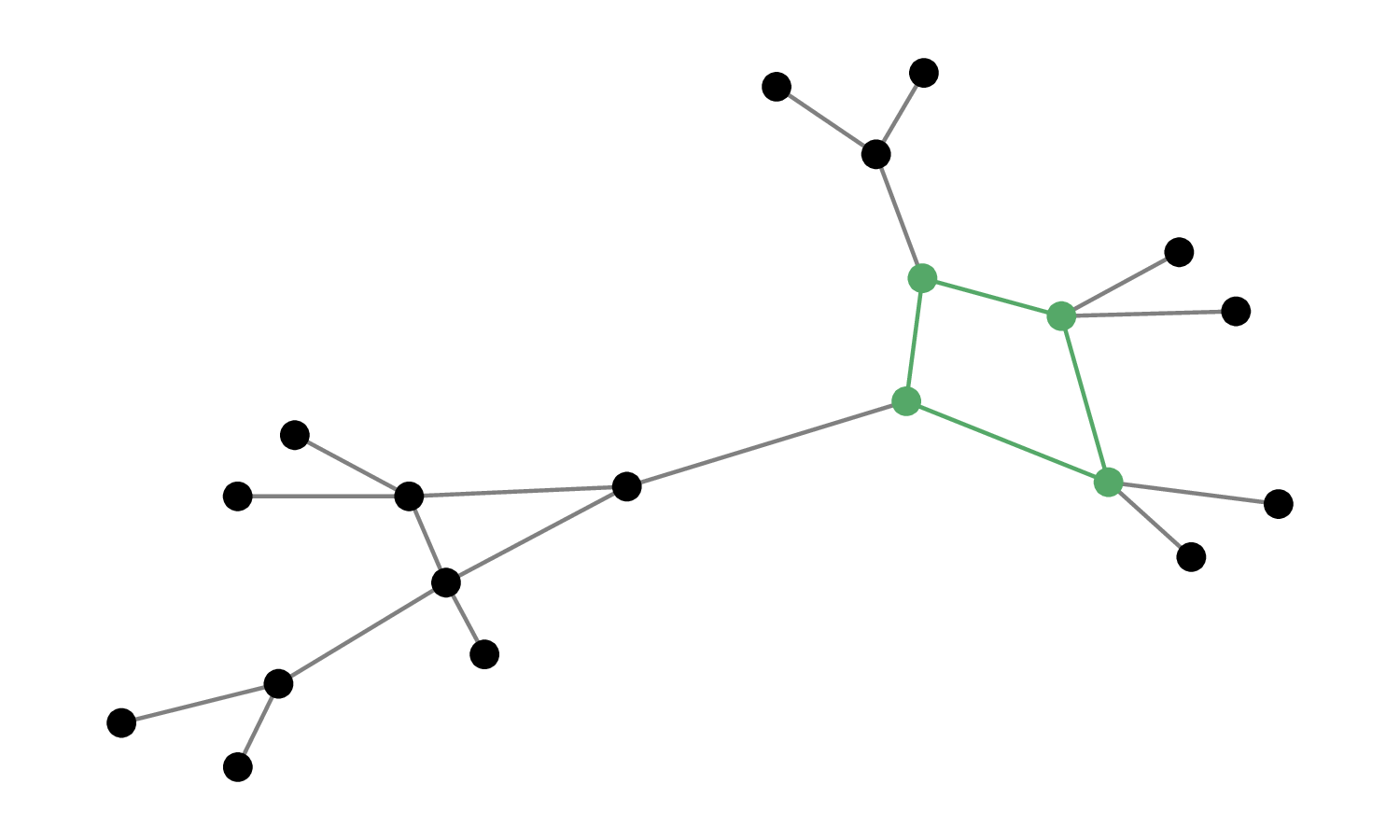}
    \caption{$\Gamma_2\quad |H^{\varphi_2}_\Lambda| = 8$}
    \label{fig:second}
\end{subfigure}
\caption{Considering $\Lambda$ as a 4-cycle graph, the set $H^{\varphi_1}_{\Lambda}$ is the empty SP since there is no strict embedding of $\Lambda$ to $\Gamma_1$. In fact, the red dashed edge joins a node pair whose $\varphi$-value is 1, while each pair of non-adjacent nodes in $\Lambda$ has vanishing $\psi_0$-value. On the other hand, there are 8 strict embeddings of $\Lambda$ into $\Gamma_2$ (i.e. as many as the self-isomorphisms of $\Lambda$), hence we obtain an SP of size $|H^{\varphi_2}_{\Lambda}| = 8$.}
\label{fig:stream}
\end{figure}

Indeed the cardinality of $H^{\varphi}_\Lambda$ is easily computable as $|H^{\varphi}_\Lambda| = n^\varphi_\Lambda |G_\Lambda|$ where $n^\varphi_\Lambda$ is the number of occurrences in $\Gamma\sim\varphi$ of subgraphs that are images of strict embeddings of $\Lambda$.

Once fixed $\widetilde{\Gamma} \sim \widetilde{\varphi}$, as well as $\Lambda \sim\psi_0$, we can define the GENEO $F_\Lambda^0$, which is represented as $
\displaystyle F_\Lambda^0(\varphi) = \frac{1}{|H^{\widetilde{\varphi}}_\Lambda|} \sum_{h \in H^{\widetilde{\varphi}}_{\Lambda}} \varphi \circ h
$, thanks to Proposition~\ref{prop:geneogenperm}. Unfortunately, this operator can only serve as a GENEO for the pairs $((\{ \widetilde{\varphi} \}, G_{\widetilde{\Gamma}}), (\Psi, \{ id_Y \}))$ with respect to the trivial homomorphism. The root cause of this limitation lies in the $\widetilde{\varphi}$-dependent normalization, which is tied to the fact that permutant $H^{\widetilde{\varphi}}_\Lambda$ is specific for $\widetilde{\Gamma}$. Consequently, we cannot utilize this definition for comparing other graphs $\Gamma$ up to transformations in $G$. However, we can define another GENEO that is invariant under the action of $G$ by maintaining a fixed $\Lambda$, and for each possible input $\Gamma \sim \varphi$, use the corresponding SP $H^\varphi_\Lambda$. It's important to note that if Proposition~\ref{prop:geneogenperm} fails to apply in this scenario, the subsequent new Proposition becomes applicable instead.

\begin{proposition}
\label{prop:SGPop}
The operator $F_\Lambda$ defined as
\begin{equation}
    \label{eqn:SGPop}
    F_\Lambda(\varphi) = \frac{1}{{\hat C}} \sum_{h\in H^\varphi_\Lambda} \varphi \circ h
\end{equation}
with the usual convention that a sum with no terms evaluates to 0, is a linear GENEO between the pairs $(\Phi, G),\, (\Psi, \{id_Y\})$ with respect to the trivial homomorphism $T:G\to \{id_Y\}$, where ${\hat C} = n_\Lambda |G_\Lambda|$, and $n_\Lambda$ is the number of occurrences of $\Lambda$ in a complete graph with $N$ nodes.
\end{proposition}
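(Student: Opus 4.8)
The plan is to verify the defining conditions of a GENEO directly for $F_\Lambda$, together with the codomain inclusion $F_\Lambda(\Phi)\subseteq\Psi$, without invoking Proposition~\ref{prop:geneogenperm}: the whole point of the statement is that the permutant $H^\varphi_\Lambda$ now depends on the input $\varphi$, so the fixed-permutant argument is unavailable. The first step I would take is to record the uniform size bound that underlies everything. By Definition~\ref{def:SGP} every $h\in H^\varphi_\Lambda$ is an injective placement of the $k$ vertices of $\Lambda$ into the $N$ vertices of $K_N$, so $H^\varphi_\Lambda$ is always contained in the set of all such placements, which has exactly $N!/(N-k)!$ elements. A short count shows that this is precisely the normalizing constant, namely $\hat C = n_\Lambda|G_\Lambda| = N!/(N-k)!$, and hence $|H^\varphi_\Lambda|\le\hat C$ for every $\varphi\in\Phi$. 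This is exactly the property that the $\varphi$-dependent normalization of $F_\Lambda^0$ lacked.

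Granting this bound, the codomain condition and non-expansiveness are almost immediate. Since $\varphi$ is $\{0,1\}$-valued, each summand $\varphi(h(y))$ lies in $\{0,1\}$, so for every $y\in Y$ one has $0\le F_\Lambda(\varphi)(y)\le |H^\varphi_\Lambda|/\hat C\le 1$; thus $F_\Lambda(\varphi)$ takes values in $[0,1]$ and $F_\Lambda(\Phi)\subseteq\Psi$. Non-expansiveness then follows from the binary nature of $\Phi$: if $\varphi_1=\varphi_2$ both sides vanish, and if $\varphi_1\ne\varphi_2$ then $\|\varphi_1-\varphi_2\|_\infty=1$, while the outputs, being $[0,1]$-valued, satisfy $\|F_\Lambda(\varphi_1)-F_\Lambda(\varphi_2)\|_\infty\le 1$. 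Linearity is read off the defining formula, which exhibits $F_\Lambda$ as a normalized superposition of the composition operators $\varphi\mapsto\varphi\circ h$.

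For equivariance, since $T$ is trivial the required identity $F_\Lambda(\varphi\circ g)=F_\Lambda(\varphi)\circ T(g)$ collapses to the $G$-invariance $F_\Lambda(\varphi\circ g)=F_\Lambda(\varphi)$ for all $g\in G$. I would establish this by re-indexing the defining sum through the map $h\mapsto g\circ h$, which I would check is a bijection from $H^{\varphi\circ g}_\Lambda$ onto $H^\varphi_\Lambda$: it is well defined because $g$ is a bijection of $X$ and, by associativity, the strictness condition $(\varphi\circ g)\circ h=\psi_0$ is equivalent to $\varphi\circ(g\circ h)=\psi_0$, with inverse $h'\mapsto g^{-1}\circ h'$. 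Since the summand $(\varphi\circ g)\circ h$ attached to $h$ equals the summand $\varphi\circ(g\circ h)$ attached to $g\circ h$, and the constant $\hat C$ does not depend on the input, re-indexing turns the sum defining $F_\Lambda(\varphi\circ g)$ into the sum defining $F_\Lambda(\varphi)$.

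The main obstacle is conceptual rather than computational: one must see how a permutant that varies with $\varphi$ can still produce a genuine GENEO, and the resolution rests entirely on adopting the uniform normalization $\hat C$ equal to the maximal possible permutant size $n_\Lambda|G_\Lambda|$ rather than the input-dependent size $|H^\varphi_\Lambda|$. I would therefore be most careful about the counting identity $\hat C=N!/(N-k)!$ and the resulting bound $|H^\varphi_\Lambda|\le\hat C$, since these simultaneously deliver the codomain inclusion, and hence non-expansiveness, and are compatible with the $G$-invariance; by contrast, once the bijection $h\mapsto g\circ h$ is in hand the equivariance argument is purely formal.
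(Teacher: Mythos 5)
Your proof is correct and follows the same skeleton as the paper's: your equivariance argument, via the re-indexing bijection $h\mapsto g\circ h$ between $H^{\varphi\circ g}_\Lambda$ and $H^{\varphi}_\Lambda$, is exactly the one given there, and in both proofs non-expansiveness reduces to the observation that distinct binary functions are at $L^\infty$-distance $1$ while the outputs lie in $[0,1]$. Where you genuinely diverge is in how you certify the key bound $|H^\varphi_\Lambda|\le\hat C$: the paper compares occurrence counts, writing $|H^{\varphi}_\Lambda|=n^{\varphi}_\Lambda|G_\Lambda|$ and using $n^{\varphi}_\Lambda\le n_\Lambda$ (every image of a strict embedding of $\Lambda$ into $\Gamma$ is in particular a copy of $\Lambda$ inside $K_N$), whereas you bound $H^\varphi_\Lambda$ by the full set of injections $V_k\to V_N$ and prove the exact identity $\hat C=n_\Lambda|G_\Lambda|=\frac{N!}{(N-k)!}$ by an orbit count (each copy of $\Lambda$ on a $k$-element vertex subset of $V_N$ arises from exactly $|G_\Lambda|$ injections). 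Under the paper's convention that an occurrence of $\Lambda$ in $K_N$ is a copy placed on a $k$-element vertex subset, your identity is correct and in fact sharpens the remark following Proposition~\ref{prop:SGPop}, which asserts only the inequality $\hat C\le\frac{N!}{(N-k)!}$; your route also makes the codomain inclusion $F_\Lambda(\Phi)\subseteq\Psi$ explicit, which the paper leaves implicit (it surfaces only in the proof of the proposition about $F^d_\Lambda$). One caveat: your one-line justification of linearity, namely that the formula ``exhibits $F_\Lambda$ as a normalized superposition of the composition operators $\varphi\mapsto\varphi\circ h$,'' is misleading, since the index set $H^\varphi_\Lambda$ varies with the input --- precisely the feature you correctly flag at the outset --- so the defining sum is not a fixed superposition; the paper's proof silently skips linearity as well (and $\Phi$, consisting of $\{0,1\}$-valued functions, is not closed under linear combinations, so the claim is at best informal), but you should not present it as immediate from the formula.
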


\begin{proof}
We need to show equivariance and non-expansiveness of $F_\Lambda$.
\begin{itemize}
\item \textbf{Equivariance:} we note that, since $h \in H^\varphi_\Lambda$ is injective, for every isomorphism $g \in G$ the map $h \mapsto g\circ h$ is a bijection between $H^{\varphi \circ g}_\Lambda$ and $H^{\varphi}_\Lambda$, that is $h \in H^{\varphi \circ g}_\Lambda \iff g \circ h \in H^{\varphi}_\Lambda$, hence:
\[
\begin{aligned}
F_\Lambda(\varphi \circ g) &= \frac{1}{{\hat C}} \sum_{h \in H^{\varphi \circ g}_\Lambda} \varphi \circ g \circ h\\
 &= \frac{1}{{\hat C}} \sum_{h' \in H^{\varphi}_\Lambda} \varphi \circ h'\\ &= F_\Lambda(\varphi)= F_\Lambda(\varphi) \circ Id_Y = F_\Lambda(\varphi) \circ T(g) 
\end{aligned}
\]
\item \textbf{Non-Expansiveness:} 
If $\varphi_1 \ne \varphi_2$ (this implies that the two functions must differ at least on a pair of vertices) then, since $\varphi_1,\varphi_2$ take values in the set $\{0,1\}$, $\|\varphi_1 - \varphi_2\|_{\infty} = 1$. Hence given $e = \{w_i, w_j\} \in Y$:
	\[
		\begin{aligned}
			|F_\Lambda(\varphi_1)(e) - F_\Lambda(\varphi_2)(e)| &=\left|\frac{1}{{\hat C}} \sum_{h \in H^{\varphi_1}_\Lambda} (\varphi_1 \circ h)(e) - \frac{1}{{\hat C}} \sum_{h \in H^{\varphi_2}_\Lambda} (\varphi_2\circ h)(e)\right|\\
			&=\frac{1}{{\hat C}} \biggl|\left|H^{\varphi_1}_\Lambda\right| - \left|H^{\varphi_2}_\Lambda\right|\biggr| \\
			&\le \frac{\max\left(\left|H^{\varphi_1}_\Lambda\right|, \left|H^{\varphi_2}_\Lambda\right|\right)}{{\hat C}} \\
            &= \frac{\max\left\{n^{\varphi_1}_\Lambda |G_\Lambda|,n^{\varphi_2}_\Lambda |G_\Lambda|\right\}}{n_\Lambda |G_\Lambda|}\\
            &= \frac{\max\left\{n^{\varphi_1}_\Lambda,n^{\varphi_2}_\Lambda \right\}}{n_\Lambda}\\
			&\le 1 = \|\varphi_1 - \varphi_2\|_{\infty}\\
		\end{aligned}
	\]
By taking the supremum on $e$ in the left hand side we get the inequality $\| F_\Lambda(\varphi_1) - F_\Lambda(\varphi_2)\|_{\infty} \le \|\varphi_1 - \varphi_2\|_{\infty}$.
Otherwise, if $\varphi_1 = \varphi_2$ we get that $F_\Lambda(\varphi_1) = F_\Lambda(\varphi_2)$, thus $\| F_\Lambda(\varphi_1) - F_\Lambda(\varphi_2)\|_{\infty} = 0 =\|\varphi_1 - \varphi_2\|_{\infty}$.
\end{itemize}

\end{proof}

The normalizing constant ${\hat C} = n_\Lambda |G_\Lambda|$ is not easily computable for a generic $\Lambda$, anyway a simple upper bound can be found by considering the number of injective functions from $k$ elements to $N$ elements, literally ${\hat C} \le \frac{N!}{(N-k)!}$. Thus, by using the upper bound instead of ${\hat C}$ in Proposition~\ref{prop:SGPop}, non-expansiveness is still guaranteed.

Finally, we introduce the last operator that will be used as the basic unit for comparing graphs:
\begin{definition}
\label{def:diffop}
Given $\Lambda$ as before we can define an operator on pairs of graphs ${F^d_\Lambda} \colon \Phi \times \Phi \to \Psi$ in the following way:
    \begin{equation}
    \label{eqn:diffop}
        {F^d_\Lambda}(\varphi_1, \varphi_2) = \frac{1}{2}|F_{\Lambda}(\varphi_1) - F_{\Lambda}(\varphi_2)|
    \end{equation}
\end{definition}

\begin{proposition}
    The operator defined in Equation~\ref{eqn:diffop} is a GENEO from $(\Phi \times \Phi, G\times G)$ to $(\Psi, \{id_Y\})$ with respect to the trivial homomorphism $T:G\times G\to\{id_Y\}$.
\end{proposition}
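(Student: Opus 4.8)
The plan is to verify the two defining axioms of a GENEO for $F^d_\Lambda$ directly, in each case reducing to the corresponding property of the single-argument operator $F_\Lambda$ established in Proposition~\ref{prop:SGPop}. Throughout I regard $G\times G$ as acting componentwise, $(\varphi_1,\varphi_2)\circ(g_1,g_2)=(\varphi_1\circ g_1,\varphi_2\circ g_2)$, and I equip $\Phi\times\Phi$ with the product sup-norm, so that $\|(\varphi_1,\varphi_2)-(\varphi_1',\varphi_2')\|_\infty=\max\{\|\varphi_1-\varphi_1'\|_\infty,\|\varphi_2-\varphi_2'\|_\infty\}$. Before the axioms I would record the codomain check: since $F_\Lambda(\varphi_i)$ takes values in $[0,1]$, the difference $F_\Lambda(\varphi_1)-F_\Lambda(\varphi_2)$ lies in $[-1,1]$, so $\tfrac12|F_\Lambda(\varphi_1)-F_\Lambda(\varphi_2)|$ lies in $[0,\tfrac12]\subseteq[0,1]$ and indeed $F^d_\Lambda$ maps into $\Psi$.

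For equivariance I would simply compute, using the componentwise action together with the equivariance of $F_\Lambda$ (whose target homomorphism is trivial, so $F_\Lambda(\varphi_i\circ g_i)=F_\Lambda(\varphi_i)$):
\[
F^d_\Lambda\bigl((\varphi_1,\varphi_2)\circ(g_1,g_2)\bigr)=\tfrac12\bigl|F_\Lambda(\varphi_1\circ g_1)-F_\Lambda(\varphi_2\circ g_2)\bigr|=\tfrac12\bigl|F_\Lambda(\varphi_1)-F_\Lambda(\varphi_2)\bigr|=F^d_\Lambda(\varphi_1,\varphi_2).
\]
Since $T(g_1,g_2)=\mathrm{id}_Y$, the right-hand side equals $F^d_\Lambda(\varphi_1,\varphi_2)\circ T(g_1,g_2)$, which is exactly the equivariance requirement. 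This step is essentially automatic once one notes that the trivial homomorphism collapses the action on the target.

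The substantive step is non-expansiveness. Fixing $e\in Y$, I would first apply the reverse triangle inequality $\bigl||a|-|b|\bigr|\le|a-b|$ with $a=F_\Lambda(\varphi_1)(e)-F_\Lambda(\varphi_2)(e)$ and $b=F_\Lambda(\varphi_1')(e)-F_\Lambda(\varphi_2')(e)$, obtaining
\[
\bigl|F^d_\Lambda(\varphi_1,\varphi_2)(e)-F^d_\Lambda(\varphi_1',\varphi_2')(e)\bigr|\le\tfrac12\bigl|\bigl(F_\Lambda(\varphi_1)(e)-F_\Lambda(\varphi_1')(e)\bigr)-\bigl(F_\Lambda(\varphi_2)(e)-F_\Lambda(\varphi_2')(e)\bigr)\bigr|.
\]
A further application of the ordinary triangle inequality splits this into two terms; bounding each by the non-expansiveness of $F_\Lambda$ yields $\tfrac12\bigl(\|\varphi_1-\varphi_1'\|_\infty+\|\varphi_2-\varphi_2'\|_\infty\bigr)$. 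Finally I would dominate each summand by the product norm $\max\{\|\varphi_1-\varphi_1'\|_\infty,\|\varphi_2-\varphi_2'\|_\infty\}$ and take the supremum over $e$, concluding $\|F^d_\Lambda(\varphi_1,\varphi_2)-F^d_\Lambda(\varphi_1',\varphi_2')\|_\infty\le\|(\varphi_1,\varphi_2)-(\varphi_1',\varphi_2')\|_\infty$.

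I expect the only real subtlety to be this last inequality: the triangle-inequality split produces a factor of $2$ (via $\|\varphi_1-\varphi_1'\|_\infty+\|\varphi_2-\varphi_2'\|_\infty\le 2\max\{\cdots\}$), and it is precisely the normalizing $\tfrac12$ in Definition~\ref{def:diffop} that cancels it under the sup product-norm convention; without the $\tfrac12$ the operator would merely be $2$-Lipschitz rather than non-expansive. Hence the argument is routine provided one combines the reverse triangle inequality, the componentwise non-expansiveness of $F_\Lambda$, and the max product-norm in exactly this order.
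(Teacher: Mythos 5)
Your proof is correct and follows essentially the same route as the paper's: the identical equivariance computation collapsing through the trivial homomorphism, and the identical pointwise chain for non-expansiveness (reverse triangle inequality, regrouping, triangle inequality, non-expansiveness of $F_\Lambda$, then $\tfrac12(a+b)\le\max(a,b)$ under the max product norm, followed by the supremum over $e\in Y$), with your explicit codomain check playing the role of the paper's preliminary observation that $F_\Lambda(\varphi)$ takes values in $\left\{0,\,|H^\varphi_\Lambda|/{\hat C}\right\}$. One minor quibble with your closing aside: in this specific binary setting, where $\|\varphi_i-\varphi_i'\|_\infty\in\{0,1\}$ and each score lies in $[0,1]$, the operator without the factor $\tfrac12$ would in fact still be non-expansive; what is true is that your (and the paper's) chain of inequalities would then only certify $2$-Lipschitz, so the factor is needed for this proof method rather than for the conclusion itself.
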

\begin{proof}
    We need to show equivariance and non-expansiveness of ${F^d_\Lambda}$:

    \begin{itemize}
    \item \textbf{Equivariance:}
    $$\begin{aligned}
    &{F^d_\Lambda}((\varphi_1, \varphi_2) \circ (g_1, g_2)) =\\  &= \frac{1}{2}|F_{\Lambda}(\varphi_1 \circ g_1) - F_{\Lambda}(\varphi_2 \circ g_2)|\\ &= \frac{1}{2}|F_{\Lambda}(\varphi_1) - F_{\Lambda}(\varphi_2)|\\ &= {F^d_\Lambda}(\varphi_1, \varphi_2)
    = {F^d_\Lambda}(\varphi_1, \varphi_2) \circ id_Y 
    = {F^d_\Lambda}(\varphi_1, \varphi_2) \circ T((g_1,g_2))
    \end{aligned}$$
    
     as $F_{\Lambda}$ is invariant with respect to any $g \in G$.
    
    \item \textbf{Non-Expansiveness:} 

    Let us start by noting that for any $\varphi\in\Phi$, 
    given the fact that the functions in $\Phi$ have only 0 and 1 as possible values, 
$$
F_\Lambda(\varphi)(\{w_i, w_j\}) = \frac{1}{{\hat C}} \sum_{h \in H^\varphi_\Lambda} (\varphi \circ h)(\{w_i, w_j\}) = 
\begin{cases}
    \frac{1}{{\hat C}} \sum_{h \in H^\varphi_\Lambda} 1 = |H^\varphi_\Lambda|/{\hat C}&\text{  if  }\psi_0(\{w_i, w_j\}) = 1 \\
   \frac{1}{{\hat C}} \sum_{h \in H^\varphi_\Lambda} 0 = 0  &\text{  if  }\psi_0(\{w_i, w_j\}) = 0 \\
\end{cases}.
$$
It follows that the values taken by the function 
$F_\Lambda(\varphi)$ belong to the set 
$\left\{0,|H^\varphi_\Lambda|/{\hat C}\right\}$.

Therefore, for any $e =\{w_i,w_j\}\in Y$
    
    $$\begin{aligned}  
    &|{F^d_\Lambda}((\varphi_1, \varphi_2))(e) - {F^d_\Lambda}((\varphi_3, \varphi_4))(e)|\\
    &= \frac{1}{2}\Bigl||F_{\Lambda}(\varphi_1)(e) - F_{\Lambda}(\varphi_2)(e)|-|F_{\Lambda}(\varphi_3)(e) - F_{\Lambda}(\varphi_4)(e)|\Bigr|\\
     &\le \frac{1}{2}\Bigl|F_{\Lambda}(\varphi_1)(e) - F_{\Lambda}(\varphi_2)(e)-F_{\Lambda}(\varphi_3)(e) + F_{\Lambda}(\varphi_4)(e)\Bigr|\\
    &= \frac{1}{2}\Bigl|\big(F_{\Lambda}(\varphi_1)(e) - F_{\Lambda}(\varphi_3)(e)\big)-\big(F_{\Lambda}(\varphi_2)(e) - F_{\Lambda}(\varphi_4)(e)\big)\Bigr|\\
    &\le \frac{1}{2}\big(\big|F_{\Lambda}(\varphi_1)(e) - F_{\Lambda}(\varphi_3)(e)\big| + \big|F_{\Lambda}(\varphi_2)(e) - F_{\Lambda}(\varphi_4)(e)\big|\big)\\
    &\le \frac{1}{2}\big(\|F_{\Lambda}(\varphi_1) - F_{\Lambda}(\varphi_3)\|_{\infty} + \|F_{\Lambda}(\varphi_2) - F_{\Lambda}(\varphi_4)\|_{\infty}\big)\\
    &\le \frac{1}{2}\left(\|\varphi_1 - \varphi_3\|_{\infty}+ \|\varphi_2 - \varphi_4\|_{\infty}\right)\\
    &\le \max\left(\|\varphi_1 - \varphi_3\|_{\infty}, \|\varphi_2 - \varphi_4\|_{\infty}\right)\\
    &= \left\|\left(\varphi_1,\varphi_2\right)-\left(\varphi_3,\varphi_4\right)\right\|_{\infty}
    \end{aligned}$$
   where the last symbol $\|\cdot\|_\infty$ denotes the max-norm in $\Phi\times\Phi$.
    Hence taking the supremum over $e =\{w_i,w_j\} \in Y$:
    \[
    \|{F^d_\Lambda}((\varphi_1, \varphi_2)) - {F^d_\Lambda}((\varphi_3, \varphi_4))\|_{\infty} \le \left\|\left(\varphi_1,\varphi_2\right)-\left(\varphi_3,\varphi_4\right)\right\|_{\infty}.
    \]
    \end{itemize}
\end{proof}

The value $\big| |H^{\varphi_1}_\Lambda|/{\hat C}-|H^{\varphi_2}_\Lambda|/{\hat C}\big|$ taken by function ${F^d_\Lambda}((\varphi_1,\varphi_2))$ at every edge of $\Lambda$ can be interpreted as a ``non-isomorphism score'' for the pair $(\varphi_1,\varphi_2)$, with respect to subgraph $\Lambda$. We will label this score as $\chi_\Lambda(\varphi_1, \varphi_2) = \|{F^d_\Lambda}((\varphi_1,\varphi_2))\|_{\infty} \in [0,1]$.

\subsection*{Subgraph GENEO Network}

To begin, let us consider a collection of $p \ge 2$ distinct subgraphs, denoted as $\Lambda_1,\dots,\Lambda_p$. By stacking the information provided by the GENEOs $F_{\Lambda_j}$ for each of these subgraphs, we can generate an isomorphism-invariant $p$-dimensional vector.
$$
\chi_{\{1,\dots,p\}}((\varphi_1, \varphi_2)) = \begin{pmatrix} \chi_{\Lambda_1}((\varphi_1, \varphi_2)) \\  \chi_{\Lambda_2}((\varphi_1, \varphi_2))\\\vdots\\\chi_{\Lambda_p}((\varphi_1, \varphi_2)) \end{pmatrix} 
$$

Subsequently, we can find the maximum of our $p$ GENEOs that, according to the rules for algebraic operations between GENEOs outlined in~\cite{remalg2017}, still results in a valid GENEO. 

\begin{proposition}
 The map $F_{\{1,\dots,p\}}$ taking each pair $(\varphi_1,\varphi_2)$ to the number $\max\left(\chi_{\Lambda_1}((\varphi_1, \varphi_2)),\ldots,\chi_{\Lambda_p}((\varphi_1, \varphi_2))\right)$ is a GENEO from $(\Phi \times \Phi, G\times G)$ to $(\R, \{id_S\})$ with respect to the trivial homomorphism $T':G\times G\to \{id_S\}$ (where $\R$ is identified with the set of real-valued functions defined on a fixed singleton $S$).
\end{proposition}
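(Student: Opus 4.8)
The plan is to reduce the claim to two facts: that each non-isomorphism score $\chi_{\Lambda_j}$ is itself a GENEO from $(\Phi\times\Phi, G\times G)$ to $(\R,\{id_S\})$, and that the pointwise maximum of finitely many such GENEOs is again a GENEO. Since the target homomorphism $T'$ is trivial and $\R$ is identified with functions on a singleton $S$, the equivariance requirement collapses to plain $(G\times G)$-invariance of the output value, so there is nothing to track on the codomain side and only the two conditions of Definition~\ref{def:geneo} remain to be checked.

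For the equivariance of $F_{\{1,\dots,p\}}$, I would first recall that the previous proposition already gives, for each $j$, the equivariance of $F^d_{\Lambda_j}$ with respect to the trivial homomorphism, i.e. $F^d_{\Lambda_j}((\varphi_1,\varphi_2)\circ(g_1,g_2)) = F^d_{\Lambda_j}(\varphi_1,\varphi_2)$ as functions in $\Psi$. Applying $\|\cdot\|_\infty$ to both sides shows $\chi_{\Lambda_j}((\varphi_1,\varphi_2)\circ(g_1,g_2)) = \chi_{\Lambda_j}(\varphi_1,\varphi_2)$, so every component is $(G\times G)$-invariant. The maximum of $(G\times G)$-invariant quantities is again invariant, which is exactly the equivariance required into the trivial-group target.

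The substance is non-expansiveness, which I would handle in two steps. First, each $\chi_{\Lambda_j}$ is non-expansive: by the reverse triangle inequality for the sup-norm, $|\,\|F^d_{\Lambda_j}(\varphi_1,\varphi_2)\|_\infty - \|F^d_{\Lambda_j}(\varphi_3,\varphi_4)\|_\infty\,| \le \|F^d_{\Lambda_j}(\varphi_1,\varphi_2) - F^d_{\Lambda_j}(\varphi_3,\varphi_4)\|_\infty$, and the right-hand side is bounded by $\|(\varphi_1,\varphi_2)-(\varphi_3,\varphi_4)\|_\infty$ thanks to the non-expansiveness of $F^d_{\Lambda_j}$ established earlier. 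Second, I would invoke the elementary lemma $|\max_j a_j - \max_j b_j| \le \max_j |a_j - b_j|$ (proved by selecting the index attaining the larger of the two maxima) to pass from the components to their maximum. Chaining the two inequalities yields $|F_{\{1,\dots,p\}}(\varphi_1,\varphi_2) - F_{\{1,\dots,p\}}(\varphi_3,\varphi_4)| \le \max_j|\chi_{\Lambda_j}(\varphi_1,\varphi_2)-\chi_{\Lambda_j}(\varphi_3,\varphi_4)| \le \|(\varphi_1,\varphi_2)-(\varphi_3,\varphi_4)\|_\infty$.

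I expect the only genuinely delicate step to be the second one, namely transferring the $1$-Lipschitz property through the maximum via the lemma $|\max_j a_j - \max_j b_j| \le \max_j |a_j - b_j|$; everything else is bookkeeping that reuses the already-proven properties of $F^d_{\Lambda_j}$ together with the reverse triangle inequality. Alternatively, one could shortcut the whole argument by appealing to the algebraic rules for combining GENEOs cited in~\cite{remalg2017}, under which the maximum of $\R$-valued GENEOs is automatically a GENEO; the direct route above is preferable here since it makes the role of the sup-norm and the invariance of each $\chi_{\Lambda_j}$ explicit.
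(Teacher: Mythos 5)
Your proof is correct and takes essentially the same route as the paper: the paper factors each $\chi_{\Lambda_j}$ as the composition of the GENEO $F^d_{\Lambda_j}$ with the operator $R(\psi)=\|\psi\|_\infty$, whose non-expansiveness is exactly your reverse-triangle-inequality step, and then invokes an extension of Prop.~2 of~\cite{remalg2017} for the maximum. The only difference is presentational: where the paper outsources the max step to that reference, you prove it inline via the elementary bound $\bigl|\max_j a_j - \max_j b_j\bigr| \le \max_j |a_j - b_j|$, which makes the argument self-contained but does not change the underlying decomposition.
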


\begin{proof}
    We consider the map $R$ taking each function $\psi\in\Psi$ to the number $\|\psi\|_\infty$. The inequality $\big|
    \|\psi_1\|_\infty-\|\psi_2\|_\infty
    \big|\le\|\psi_1-\psi_2\|_\infty$ shows that $R$ is non-expansive. $R$ is trivially invariant under the action of $\{id_Y\}$, so it is a GENEO from $(\Psi, \{id_Y\})$ to $(\R, \{id_S\})$ with respect to the trivial homomorphism $T'':\{id_Y\}\to\{id_S\}$. Each $\chi_{\Lambda_i}$ is then the composition of the GENEO $F_{\Lambda_i}^d$ from $(\Phi \times \Phi, G\times G)$ to $(\Psi, \{id_Y\})$ with the GENEO $R$, so a GENEO itself from $(\Phi \times \Phi, G\times G)$ to $(\R, \{id_S\})$. Finally, a simple extension of the proof of Prop. 2 in~\cite{remalg2017} shows that the maximum of a set of GENEOs is a GENEO with the same domain and range.
\end{proof}

This GENEO will combine the observations of the first level GENEOs $F_{\Lambda_j}$ with a vetoing perspective. The resulting aggregated operator $F_{\{1,\dots,p\}}$ (denoted $F_p$ for brevity when considering all indices up to $p$), provides a more refined non-isomorphism score according to the following theorem.

\begin{theorem}
    \label{thm:iso}
    Given two graphs $\Gamma_1 \sim \varphi_1$ and $\Gamma_2 \sim \varphi_2$, for every choice of $p\ge 1$ and $\Lambda_1, \dots, \Lambda_p$ the followings hold:
    \begin{enumerate}
        \item If $\Gamma_1$ and $\Gamma_2$ are isomorphic then $\chi_{\{1,\dots,p\}}((\varphi_1, \varphi_2)) = \mathbf{0}$ or equivalently $F_{\{1,\dots,p\}}((\varphi_1, \varphi_2)) = 0$.
        \item If $\chi_{\{1,\dots,p\}}((\varphi_1, \varphi_2)) \ne \mathbf{0}$ or equivalently $F_{\{1,\dots,p\}}((\varphi_1, \varphi_2)) \ne 0$ then $\Gamma_1$ and $\Gamma_2$ are not isomorphic.
    \end{enumerate}
\end{theorem}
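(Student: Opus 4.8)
The plan is to observe first that the two items are logically equivalent: statement (2) is precisely the contrapositive of statement (1), since $\chi_{\{1,\dots,p\}}((\varphi_1,\varphi_2)) = \mathbf{0}$ holds if and only if $F_{\{1,\dots,p\}}((\varphi_1,\varphi_2)) = \max_j \chi_{\Lambda_j}((\varphi_1,\varphi_2)) = 0$, each component $\chi_{\Lambda_j} \in [0,1]$ being nonnegative. Hence it suffices to prove item (1), namely that isomorphic graphs receive a vanishing non-isomorphism score with respect to every subgraph $\Lambda_j$.

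The whole argument reduces to the $G$-invariance of each operator $F_{\Lambda_j}$. First I would unwind the definitions: $\chi_{\Lambda}(\varphi_1,\varphi_2) = \|F^d_\Lambda((\varphi_1,\varphi_2))\|_\infty = \tfrac{1}{2}\|F_\Lambda(\varphi_1) - F_\Lambda(\varphi_2)\|_\infty$. Next, assuming $\Gamma_1$ and $\Gamma_2$ are isomorphic, by the very definition of isomorphism adopted in our setting there exists $g \in G$ with $\varphi_1 = \varphi_2 \circ g$. Now I invoke the equivariance half of Proposition~\ref{prop:SGPop}: since $F_\Lambda$ is equivariant with respect to the trivial homomorphism $T\colon G \to \{id_Y\}$, we have $F_\Lambda(\varphi_2 \circ g) = F_\Lambda(\varphi_2) \circ id_Y = F_\Lambda(\varphi_2)$, i.e.\ $F_\Lambda$ is constant on each $G$-orbit. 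Therefore $F_\Lambda(\varphi_1) = F_\Lambda(\varphi_2)$, which forces $\chi_{\Lambda_j}(\varphi_1,\varphi_2) = 0$ for every $j \in \{1,\dots,p\}$ (and this covers uniformly every $p \ge 1$). Consequently the stacked vector $\chi_{\{1,\dots,p\}}((\varphi_1,\varphi_2))$ is the zero vector, and its maximum component $F_{\{1,\dots,p\}}((\varphi_1,\varphi_2))$ equals $0$, establishing item (1).

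Item (2) then follows immediately by contraposition from item (1). I do not expect a serious obstacle: the content of the theorem is entirely carried by the $G$-invariance of $F_\Lambda$ already packaged in the equivariance clause of Proposition~\ref{prop:SGPop}, and no estimate or computation beyond substituting $\varphi_1 = \varphi_2 \circ g$ is required. The only point deserving a moment of care is the logical bookkeeping between the vector formulation ($\chi_{\{1,\dots,p\}} = \mathbf{0}$) and the scalar formulation ($F_{\{1,\dots,p\}} = 0$), which hinges precisely on the nonnegativity of each score $\chi_{\Lambda_j}$, so that their maximum vanishes exactly when all of them do.
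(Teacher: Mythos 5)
Your proof is correct and coincides with the argument the paper leaves implicit behind its one-word proof (``Straightforward''): item (2) is the contrapositive of item (1), and item (1) follows from the $G$-invariance of each $F_{\Lambda_j}$ already established in Proposition~\ref{prop:SGPop}, together with the nonnegativity of the scores $\chi_{\Lambda_j}$ linking the vector and scalar formulations. Nothing further is needed.
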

\begin{proof}
Straightforward.
\end{proof}
    
Given Theorem~\ref{thm:iso}, we can summarize the descriptions thus far into a model, the architecture of which is illustrated in Figure~\ref{fig:model}, possibly allowing the model to consist of more than three operators.

\subsection*{Experimental setup}

\begin{figure}[ht]
\centering
\includegraphics[width=0.9\linewidth]{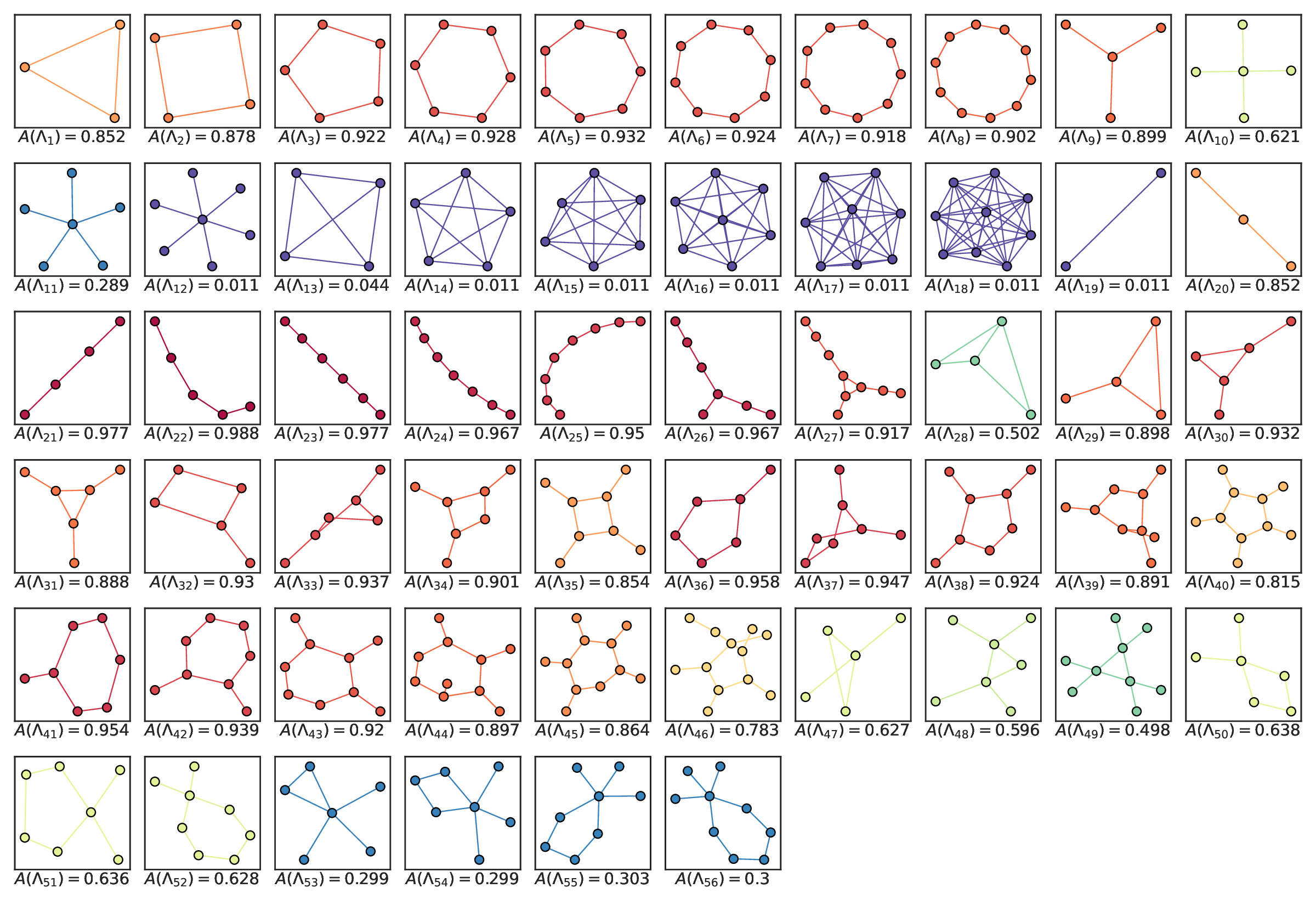}
\caption{The representations of the $\Lambda_j$ initially considered for analysis can be categorized into a limited number of classes. There are cycles, represented as $\Lambda_j$ for $j$ ranging from 1 to 8; stars, ranging from 9 to 12; complete graphs, ranging from 13 to 18; paths, ranging from 19 to 25; rigid graphs with no non-trivial self-isomorphism at 26 and 27 and cycles augmented with stars ranging from 29 to 56. Each $\Lambda_j$ also reports the accuracy of the operator $F_{\{j\}}$ using both colors and a numeric value.}
\label{fig:lambdas}
\end{figure}

To evaluate and assess the methodology discussed in the previous sections, we designed an experimental setup as outlined below. We chose to focus on the family of $r$-regular graphs ($r\ge 2$): a graph $\Gamma$ is defined as $r$-regular if every node has degree equal to $r$. It is well-known that $r$-regular graphs present challenges when testing for isomorphism, as the degree distribution often plays a crucial role in many algorithms. However, for $r$-regular graphs with identical degrees and equal numbers of nodes, the degree distribution does not provide any valuable discriminating information. It is equally well known that $r$-regular graphs are one of those classes of graphs for which polynomial algorithms exist. In the works of Luks et al.~\cite{boundval1982, n3logn1987}, a general bound of kind $\mathcal{O}(n^{\text{poly}(r)})$ was stated, bound that was specialized to $\mathcal{O}(n^3 \log n)$ for $3$-regular graphs. However, implementing these algorithms efficiently remains an open research question, making them primarily of theoretical interest. As a result, there is ongoing research into developing algorithms~\cite{pract2014, vf22018} that do not meet the polynomial time guarantee but could offer advantages even when dealing with graphs for which polynomial algorithms are known such as $r$-regular graphs.

Furthermore, it is well-known that $r$-regular graphs, lacking node weights or features to break symmetry, are indistinguishable when tested using the Weisfeiler-Lehman coloring test (1-WL)~\cite{satosurvey2020}. This inability to distinguish between $r$-regular graphs has been proven~\cite{wlgnneq2019, powgnn2019} to be inherited by message passing Graph Neural Networks (GNNs)~\cite{gnn2005}, which perform neighbor aggregation in a manner similar to 1-WL. To address such issues, more complex variants of 1-WL and GNNs, specifically $k$-WL~\cite{satosurvey2020} and $k$-GNNs~\cite{kgnn2019}, have been proposed. These advanced methods offer greater expressivity in terms of discrimination, but they also entail increased computational costs due to their consideration of all tuples of nodes instead of just neighbors.

Our first experiment focused on random samples of $r$-regular graphs with $r \in \{3, 4, 5\}$. We analyzed these graphs one $r$ at a time and then in combination. To construct a subgraph GENEO network, we considered 56 distinct subgraphs $\Lambda_j$, which are shown in Figure~\ref{fig:lambdas}. For each $r$ and every value of $N$ between 8 and 100 (with a step size of 2), a random sample of 10 $r$-regular graphs was generated. Consequently, for each $N$, we tested 45 pairs of graphs. In total, there were $2115$ unique pairs for each $r$ and $6345$ pairs that combined data from different $r$. As a starting point, we evaluated the discriminating capacity of every GENEO linked to a single $\Lambda_j$ by calculating its classification accuracy according to:

\begin{equation}
\label{eqn:acc}
A\Bigl(F_{\{j\}}\Bigr) = \frac{1}{m} \sum_{l=1}^m \mathbbm{1}\Bigl(\sgn(F_{\{j\}}((\varphi_1, \varphi_2)_l)) = d_G((\varphi_1, \varphi_2)_l)\Bigr)
\end{equation}

In Eqn.~\ref{eqn:acc} a dataset of $m$ graph pairs and associated ground truths $\bigl((\varphi_1, \varphi_2)_l, d_G((\varphi_1, \varphi_2)_l)\bigr)_{l=1}^m$ is used for the estimation, the formula simply computes the proportion of examples such that the sign of the non-isomorphism score given by operator $F_{\{j\}}$ coincides with the ground truth. Indeed the function $\mathbbm{1}\Bigl(\sgn(F_{\{j\}}((\varphi_1, \varphi_2)_l)) = d_G((\varphi_1, \varphi_2)_l)\Bigr)$ evaluates to 1 when the sign of the score and the ground truth are equal and 0 otherwise.

Based on Figure~\ref{fig:lambdas}, which also displays accuracy values, it becomes apparent that path subgraphs generally excel when considered individually. Subsequently, our focus shifted to combining individual GENEOs within a model architecture such as the one depicted in Figure~\ref{fig:model}. To narrow down our selection and maintain satisfactory performance on the given data, we employed the forward selection technique outlined in Algorithm~\ref{alg:forw}.
\begin{algorithm}
    \caption{Forward Selection}\label{alg:forw}
    \KwData{$\Lambda_1, \dots, \Lambda_p$, $((\varphi_1,\varphi_2)_k)_{k=1}^m$}
    \KwResult{$S \subseteq \{\Lambda_1, \dots, \Lambda_p\}$}
    $l \gets 1$;
    $i_l \gets \underset{t\in\{1,\dots,p\}}{\arg\max}\, A\Bigl(F_{\{j\}}\Bigr)$\;
    \While{$\underset{t\in\{1,\dots,p\} \setminus \{i_1,\dots,i_l\}}{\max} A\Bigl(F_{\{i_1,\dots,i_l\} \cup \{t\}}\Bigr) > A\Bigl(F_{\{i_1,\dots,i_l\}}\Bigr)$}{
    $i_{l+1} \gets \underset{t\in\{1,\dots,p\} \setminus \{i_1,\dots,i_l\}}{\arg\max}\, A\Bigl(F_{\{i_1,\dots,i_l\} \cup \{t\}}\Bigr)$\;
    }
    $S = \{\Lambda_{i_1}, \dots, \Lambda_{i_l}\}$\;
\end{algorithm}

Upon executing the forward selection process, the algorithm halts at the third iteration, selecting operators $F_{\Lambda_{22}}, F_{\Lambda_{21}}$, and $F_{\Lambda_4}$. The resulting accuracy is $1.00$, however, it is important to note that this high accuracy is clearly overestimated due to the limited sample size under examination. Nonetheless, these results prove functional for the subsequent analysis.

\subsection*{Comparison}

In our second experiment, we contrasted the performance of the model derived by forward selection against a set of user-friendly state-of-the-art methods. These methods include 1-WL (NetworkX~\cite{networkx2008} implementation), 2-WL and 3-WL (our custom implementation), NTX-FASTER, NTX-FAST, NTX-COULD, and NTX-IS (respectively \verb|faster_could_be_isomorphic|, \verb|fast_could_be_isomorphic|, \verb|could_be_isomorphic| and \verb|is_isomorphic| from NetworkX library, in particular NTX-IS uses the \verb|VF2++| exact algorithm~\cite{vf22018}). The algorithms were assessed on a dataset of randomly generated non-isomorphic $r$-regular pairs, with $r$ taking values in the set $\{3, 4, 5\}$ and $N$ in $\{100, 500, 1000, 5000, 10000\}$. Specifically, for each combination of $r$ and $N$, we generated 500 unique pairs (distinct from those utilized in prior sections) of non-isomorphic graphs. The results regarding execution times (these and previous computations were executed on a laptop equipped with an eight-core Intel Core i7-6700HQ CPU) are visualized in Figure~\ref{fig:time}, while detailed information on both execution times and accuracies (only for $r=3$) can be found in Table~\ref{tab:degree3}.

\begin{figure}[ht]
\centering
\includegraphics[width=0.9\linewidth]{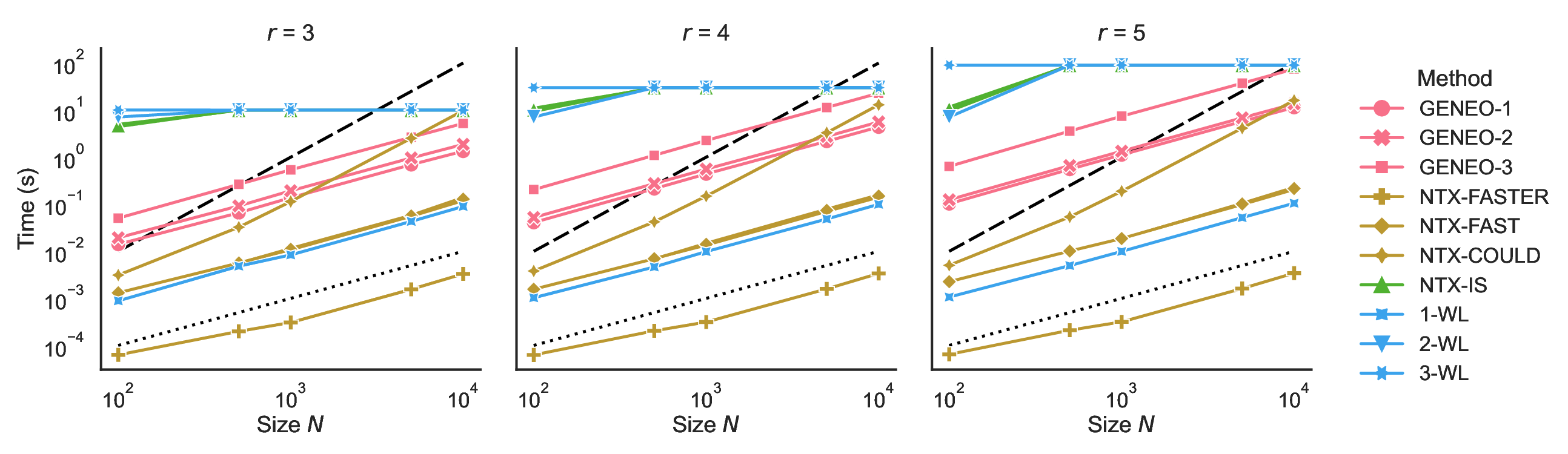}
\caption{The average execution time in function of the graph Size $N$ for the considered values of degree $r$: The GENEO-$t$ model (with $t$ being 1, 2 or 3) represents the method that aggregates the first $t$ operators identified through the forward selection process. The dotted and dashed lines represent respectively a linear and a quadratic growth.}
\label{fig:time}
\end{figure}
\begin{table}[htp]
\caption{\label{tab:degree3} The average execution times and corresponding accuracies for $r=3$ are presented below. Bold values indicate the most accurate method(s) in terms of accuracy. For every value of $r$, a maximum execution time was predetermined to prevent overly lengthy computations from occurring. Specifically, the timeout was set at 10 seconds for $r=3$, 30 seconds for $r=4$, and 90 seconds for $r=5$. It is important to note that the accuracy values are influenced by this time restriction. If a method fails to conclude within the designated timeframe, its result is automatically set to null, which counts as an incorrect classification of a non-isomorphic pair. As a result, methods like NTX-IS, which is an exact algorithm, may exhibit accuracy scores below 1.}
\centering
\small
\begin{tabular}{lcccccccccc}
\toprule
\multirow{3}*{Method} & \multicolumn{10}{c}{$N$} \\
\cmidrule(lr){2-11}
& \multicolumn{2}{c}{100} & \multicolumn{2}{c}{500} & \multicolumn{2}{c}{1000} & \multicolumn{2}{c}{5000} & \multicolumn{2}{c}{10000} \\
\cmidrule(lr){2-3} \cmidrule(lr){4-5} \cmidrule(lr){6-7} \cmidrule(lr){8-9} \cmidrule(lr){10-11}
& \small{Time} & \small{Accuracy} & \small{Time} & \small{Accuracy} & \small{Time} & \small{Accuracy} & \small{Time} & \small{Accuracy} & \small{Time} & \small{Accuracy} \\
\midrule
GENEO-1	    & 0.014	& 0.978	& 0.065	& 0.987	& 0.136	& 0.987 & 0.685 & 0.982 & 1.327 & 0.982\\
GENEO-2	    & 0.019	& 0.992	& 0.091	& 0.997	& 0.191	& 0.992 & 0.952 & 0.995 & 1.851 & 0.993\\
GENEO-3	    & 0.050	& \textbf{1.000	}& 0.261 & \textbf{1.000} & 0.532	& \textbf{1.000} & 2.626 & \textbf{1.000} & 5.167 & \textbf{1.000}\\
NTX-FASTER	& 0.000	& 0.000	& 0.000	& 0.000 & 0.000 & 0.000 & 0.002 & 0.000 & 0.003 & 0.000\\
NTX-FAST	& 0.001	& 0.728	& 0.006	& 0.775 & 0.011	& 0.755 & 0.057 & 0.780 & 0.129 & 0.758\\
NTX-COULD	& 0.003	& 0.728	& 0.033	& 0.775 & 0.114	& 0.755 & 2.520 & 0.780 & 9.737 & 0.758\\
NTX-IS	    & 5.021	& \textbf{0.667}& 9.959	& \textbf{0.005}  & 9.984	& \textbf{0.002} & 10.000 & \textbf{0.000}	 & 10.000 & \textbf{0.000}\\
1-WL	    & 0.001	& 0.000	& 0.005	& 0.000 & 0.008	& 0.000 & 0.043 & 0.000 & 0.090 & 0.000\\
2-WL	    & 7.091	& 0.000	& 10.000 & 0.000 & 10.000 & 0.000 & 10.000 & 0.000 & 10.000	& 0.000\\
3-WL	    & 10.000 & 0.000 & 10.000 & 0.000 & 10.000 & 0.000 & 10.000	& 0.000	& 10.000 & 0.000\\
\bottomrule
\end{tabular}
\end{table}
% \end{sidewaystable}

\section*{Acknowledgements}

P.F. carried out his research in the framework of the CNIT WiLab National Laboratory and the WiLab-Huawei Joint Innovation Center and was partially supported by INdAM-GNSAGA and the COST Action CaLISTA. G.B. was partially supported by INdAM-GNAMPA.

\section*{Competing interests}

The authors declare no competing interests.

\section*{Author contributions}

G.B., M.F., and P.F. developed the mathematical model and wrote the text of the paper. G.B. implemented the algorithms and ran the experiments. All authors reviewed the manuscript.

\section*{Data and Code Availability}
All data and codes used in this research are openly accessible on GitHub at \href{https://github.com/jb-sharp/spgeneos}{https://github.com/jb-sharp/spgeneos}

\printbibliography

@inproceedings{Boeal24,
  title={A geometric  {XAI} approach to protein pocket detection},
  author={Bocchi, Giovanni and Frosini, Patrizio and Micheletti, Alessandra and Pedretti, Alessandro and Palermo, Gianluca and Gadioli, Davide and Gratteri, Carmen and Lunghini, Filippo and Beccari, Andrea and Fava, Anna and Talarico, Carmine},
  booktitle={xAI-2024 Late-breaking Work, Demos and Doctoral Consortium Joint Proceedings - The 2nd World Conference on eXplainable Artificial Intelligence},
  note="to appear",
  year = {2024},
}

@article{Mi23,
author = {Micheletti, Alessandra},
year = {2023},
pages = {4-12},
title = {A new paradigm for artificial intelligence based on group equivariant non-expansive operators},
volume = {128},
journal = {European Mathematical Society Magazine},
doi = {10.4171/MAG/133}
}

@article{gerken2023,
author = {Gerken, Jan and Aronsson, Jimmy and Carlsson, Oscar and Linander, Hampus and Ohlsson, Fredrik and Petersson, Christoffer and Persson, Daniel},
year = {2023},
month = {6},
pages = {1-58},
title = {Geometric deep learning and equivariant neural networks},
volume = {56},
journal = {Artificial Intelligence Review},
doi = {10.1007/s10462-023-10502-7}
}

@Article{BeFrGiQu19,
author={Bergomi, Mattia G.
and Frosini, Patrizio
and Giorgi, Daniela
and Quercioli, Nicola},
title={Towards a topological--geometrical theory of group equivariant non-expansive operators for data analysis and machine learning},
journal={Nature Machine Intelligence},
year={2019},
month={9},
day={01},
volume={1},
number={9},
pages={423-433},
abstract={We provide a general mathematical framework for group and set equivariance in machine learning. We define group equivariant non-expansive operators (GENEOs) as maps between function spaces associated with groups of transformations. We study the topological and metric properties of the space of GENEOs to evaluate their approximating power and set the basis for general strategies to initialize and compose operators. We define suitable pseudo-metrics for the function spaces, the equivariance groups and the set of non-expansive operators. We prove that, under suitable assumptions, the space of GENEOs is compact and convex. These results provide fundamental guarantees in a machine learning perspective. By considering isometry-equivariant non-expansive operators, we describe a simple strategy to select and sample operators. Thereafter, we show how selected and sampled operators can be used both to perform classical metric learning and to inject knowledge in artificial neural networks.},
issn={2522-5839},
doi={10.1038/s42256-019-0087-3},
}

@conference {ZVERP15,
	title = {Discriminative Template Learning in Group-Convolutional Networks for Invariant Speech Representations},
	booktitle = {INTERSPEECH-2015},
	year = {2015},
	month = {9},
    pages={3229--3233},
	author = {Chiyuan Zhang and Stephen Voinea and Georgios Evangelopoulos and Lorenzo Rosasco and Tomaso Poggio}
}

@inproceedings{worrall2017harmonic,
  title={Harmonic networks: Deep translation and rotation equivariance},
  author={Worrall, Daniel E and Garbin, Stephan J and Turmukhambetov, Daniyar and Brostow, Gabriel J},
  booktitle={Proc. IEEE Conf. on Computer Vision and Pattern Recognition (CVPR)},
  volume={2},
  year={2017},
  pages={7168-7177}
}

@article{Ma12,
author = {Mallat, Stéphane},
title = {Group Invariant Scattering},
journal = {Communications on Pure and Applied Mathematics},
volume = {65},
number = {10},
pages = {1331-1398},
doi = {10.1002/cpa.21413},
abstract = {Abstract This paper constructs translation-invariant operators on \$\font\open=msbm10 at 10pt\def\R{\hbox{\open R}}{\bf L}^2({{{\R}}}^d)\$, which are Lipschitz-continuous to the action of diffeomorphisms. A scattering propagator is a path-ordered product of nonlinear and noncommuting operators, each of which computes the modulus of a wavelet transform. A local integration defines a windowed scattering transform, which is proved to be Lipschitz-continuous to the action of C2 diffeomorphisms. As the window size increases, it converges to a wavelet scattering transform that is translation invariant. Scattering coefficients also provide representations of stationary processes. Expected values depend upon high-order moments and can discriminate processes having the same power spectrum. Scattering operators are extended on L2(G), where G is a compact Lie group, and are invariant under the action of G. Combining a scattering on \$\font\open=msbm10 at 10pt\def\R{\hbox{\open R}}{\bf L}^2({{{\R}}}^d)\$ and on L2(SO(d)) defines a translation- and rotation-invariant scattering on \$\font\open=msbm10 at 10pt\def\R{\hbox{\open R}}{\bf L}^2({{{\R}}}^d)\$. © 2012 Wiley Periodicals, Inc.},
year = {2012}
}

@article{Ma16,
author = {Mallat, Stéphane },
title = {Understanding deep convolutional networks},
journal = {Philosophical Transactions of the Royal Society A: Mathematical, Physical and Engineering Sciences},
volume = {374},
number = {2065},
pages = {20150203},
year = {2016},
doi = {10.1098/rsta.2015.0203},
    abstract = { Deep convolutional networks provide state-of-the-art classifications and regressions results over many high-dimensional problems. We review their architecture, which scatters data with a cascade of linear filter weights and nonlinearities. A mathematical framework is introduced to analyse their properties. Computations of invariants involve multiscale contractions with wavelets, the linearization of hierarchical symmetries and sparse separations. Applications are discussed. }
}

@inproceedings{cohen2016group,
  title={Group equivariant convolutional networks},
  author={Cohen, Taco and Welling, Max},
  booktitle={International Conference on Machine Learning},
  pages={2990--2999},
  year={2016}
}

@ARTICLE{BeCoVi13,
  author={Y. {Bengio} and A. {Courville} and P. {Vincent}},
  journal={IEEE Transactions on Pattern Analysis and Machine Intelligence},
  title={Representation Learning: A Review and New Perspectives},
  year={2013},
  volume={35},
  number={8},
  pages={1798-1828},}

@article{AnRoPo16,
author = {Anselmi, Fabio and Rosasco, Lorenzo and Poggio, Tomaso},
title = {On invariance and selectivity in representation learning},
journal = {Information and Inference: A Journal of the IMA},
volume = {5},
number = {2},
pages = {134-158},
year = {2016},
doi = {10.1093/imaiai/iaw009},
}

@article{AERP19,
title = "Symmetry-adapted representation learning",
journal = "Pattern Recognition",
volume = "86",
pages = "201 - 208",
year = "2019",
issn = "0031-3203",
doi = "10.1016/j.patcog.2018.07.025",
author = "Fabio Anselmi and Georgios Evangelopoulos and Lorenzo Rosasco and Tomaso Poggio",
keywords = "Representation learning, Equivariant representations, Invariant representations, Dictionary learning, Convolutional neural networks, Regularization, Data transformations",
abstract = "In this paper, we propose the use of data symmetries, in the sense of equivalences under signal transformations, as priors for learning symmetry-adapted data representations, i.e., representations that are equivariant to these transformations. We rely on a group-theoretic definition of equivariance and provide conditions for enforcing a learned representation, for example the weights in a neural network layer or the atoms in a dictionary, to have the structure of a group and specifically the group structure in the distribution of the input. By reducing the analysis of generic group symmetries to permutation symmetries, we devise a regularization scheme for representation learning algorithm, using an unlabeled training set. The proposed regularization is aimed to be a conceptual, theoretical and computational proof of concept for symmetry-adapted representation learning, where the learned data representations are equivariant or invariant to transformations, without explicit knowledge of the underlying symmetries in the data."
}

@Article{Carrieri2021,
author={Carrieri, Anna Paola
and Haiminen, Niina
and Maudsley-Barton, Sean
and Gardiner, Laura-Jayne
and Murphy, Barry
and Mayes, Andrew E.
and Paterson, Sarah
and Grimshaw, Sally
and Winn, Martyn
and Shand, Cameron
and Hadjidoukas, Panagiotis
and Rowe, Will P. M.
and Hawkins, Stacy
and MacGuire-Flanagan, Ashley
and Tazzioli, Jane
and Kenny, John G.
and Parida, Laxmi
and Hoptroff, Michael
and Pyzer-Knapp, Edward O.},
title={Explainable {AI} reveals changes in skin microbiome composition linked to phenotypic differences},
journal={Scientific Reports},
year={2021},
month={2},
day={25},
volume={11},
number={1},
pages={4565},
abstract={Alterations in the human microbiome have been observed in a variety of conditions such as asthma, gingivitis, dermatitis and cancer, and much remains to be learned about the links between the microbiome and human health. The fusion of artificial intelligence with rich microbiome datasets can offer an improved understanding of the microbiome's role in human health. To gain actionable insights it is essential to consider both the predictive power and the transparency of the models by providing explanations for the predictions. We combine the collection of leg skin microbiome samples from two healthy cohorts of women with the application of an explainable artificial intelligence (EAI) approach that provides accurate predictions of phenotypes with explanations. The explanations are expressed in terms of variations in the relative abundance of key microbes that drive the predictions. We predict skin hydration, subject's age, pre/post-menopausal status and smoking status from the leg skin microbiome. The changes in microbial composition linked to skin hydration can accelerate the development of personalized treatments for healthy skin, while those associated with age may offer insights into the skin aging process. The leg microbiome signatures associated with smoking and menopausal status are consistent with previous findings from oral/respiratory tract microbiomes and vaginal/gut microbiomes respectively. This suggests that easily accessible microbiome samples could be used to investigate health-related phenotypes, offering potential for non-invasive diagnosis and condition monitoring. Our EAI approach sets the stage for new work focused on understanding the complex relationships between microbial communities and phenotypes. Our approach can be applied to predict any condition from microbiome samples and has the potential to accelerate the development of microbiome-based personalized therapeutics and non-invasive diagnostics.},
issn={2045-2322},
doi={10.1038/s41598-021-83922-6},
}

@Article{Hicks2021,
author={Hicks, Steven A.
and Isaksen, Jonas L.
and Thambawita, Vajira
and Ghouse, Jonas
and Ahlberg, Gustav
and Linneberg, Allan
and Grarup, Niels
and Str{\"u}mke, Inga
and Ellervik, Christina
and Olesen, Morten Salling
and Hansen, Torben
and Graff, Claus
and Holstein-Rathlou, Niels-Henrik
and Halvorsen, P{\aa}l
and Maleckar, Mary M.
and Riegler, Michael A.
and Kanters, J{\o}rgen K.},
title={Explaining deep neural networks for knowledge discovery in electrocardiogram analysis},
journal={Scientific Reports},
year={2021},
month={5},
day={26},
volume={11},
number={1},
pages={10949},
abstract={Deep learning-based tools may annotate and interpret medical data more quickly, consistently, and accurately than medical doctors. However, as medical doctors are ultimately responsible for clinical decision-making, any deep learning-based prediction should be accompanied by an explanation that a human can understand. We present an approach called electrocardiogram gradient class activation map (ECGradCAM), which is used to generate attention maps and explain the reasoning behind deep learning-based decision-making in ECG analysis. Attention maps may be used in the clinic to aid diagnosis, discover new medical knowledge, and identify novel features and characteristics of medical tests. In this paper, we showcase how ECGradCAM attention maps can unmask how a novel deep learning model measures both amplitudes and intervals in 12-lead electrocardiograms, and we show an example of how attention maps may be used to develop novel ECG features.},
issn={2045-2322},
doi={10.1038/s41598-021-90285-5},
}

@Article{Ru19,
author={Rudin, Cynthia},
title={Stop explaining black box machine learning models for high stakes decisions and use interpretable models instead},
journal={Nature Machine Intelligence},
year={2019},
month={5},
day={01},
volume={1},
number={5},
pages={206-215},
abstract={Black box machine learning models are currently being used for high-stakes decision making throughout society, causing problems in healthcare, criminal justice and other domains. Some people hope that creating methods for explaining these black box models will alleviate some of the problems, but trying to explain black box models, rather than creating models that are interpretable in the first place, is likely to perpetuate bad practice and can potentially cause great harm to society. The way forward is to design models that are inherently interpretable. This Perspective clarifies the chasm between explaining black boxes and using inherently interpretable models, outlines several key reasons why explainable black boxes should be avoided in high-stakes decisions, identifies challenges to interpretable machine learning, and provides several example applications where interpretable models could potentially replace black box models in criminal justice, healthcare and computer vision.},
issn={2522-5839},
doi={10.1038/s42256-019-0048-x},
}

@inproceedings{Fr16,
author = {Frosini, P.},
title = {Towards an Observer-Oriented Theory of Shape Comparison: Position Paper},
year = {2016},
isbn = {9783038680048},
abstract = {In this position paper we suggest a possible metric approach to shape comparison that is based on a mathematical formalization of the concept of observer, seen as a collection of suitable operators acting on a metric space of functions. These functions represent the set of data that are accessible to the observer, while the operators describe the way the observer elaborates the data and enclose the invariance that he/she associates with them. We expose this model and illustrate some theoretical reasons that justify its possible use for shape comparison.},
booktitle = {Proceedings of the Eurographics 2016 Workshop on 3D Object Retrieval},
pages = {5–8},
numpages = {4},
series = {3DOR '16}
}

@article{genperm2023,
Author = {Ahmad, Faraz and Ferri, Massimo and Frosini, Patrizio},
Title = {Generalized Permutants and Graph {GENEO}s},
Journal = {Machine Learning and Knowledge Extraction},
Year = {2023},
Volume = {5},
Number = {4},
Pages = {1905-1920},
Publisher = {MDPI},
Address = {ST ALBAN-ANLAGE 66, CH-4052 BASEL, SWITZERLAND},
DOI = {10.3390/make5040092},
EISSN = {2504-4990},
Keywords = {perception pair; GENEO; permutant; vertex-weighted graph; edge-weighted
   graph},
Affiliations = {University of Bologna; University of Bologna},
ORCID-Numbers = {Ferri, Massimo/0000-0002-7973-9734},
Unique-ID = {WOS:001131053200001},
OA = {gold, Green Submitted},
}

@article{BoBoBrFrQu23,
Author = {Bocchi, Giovanni and Botteghi, Stefano and Brasini, Martina and Frosini,
   Patrizio and Quercioli, Nicola},
Title = {On the finite representation of linear group equivariant operators via
   permutant measures},
Journal = {Annals of Mathematics and Artificial Intelligence},
Year = {2023},
Volume = {91},
Number = {4},
Pages = {465-487},
Publisher = {SPRINGER},
Address = {VAN GODEWIJCKSTRAAT 30, 3311 GZ DORDRECHT, NETHERLANDS},
DOI = {10.1007/s10472-022-09830-1},
EarlyAccessDate = {FEB 2023},
ISSN = {1012-2443},
EISSN = {1573-7470},
Keywords = {Group equivariant operator; Group equivariant non-expansive operator;
   Permutant; Permutant measure; Primary; Secondary},
Keywords-Plus = {NATURAL PSEUDODISTANCES; INVARIANCE},
Affiliations = {University of Milan; University of Bologna; Italian National Agency New
   Technical Energy \& Sustainable Economics Development},
ResearcherID-Numbers = {Bocchi, Giovanni/HNC-3928-2023},
ORCID-Numbers = {Bocchi, Giovanni/0000-0002-4017-8212},
Unique-ID = {WOS:000989306500002},
OA = {hybrid},
}

@inproceedings{newmet2018,
Author = {Camporesi, Francesco and Frosini, Patrizio and Quercioli, Nicola},
Title = {On a New Method to Build Group Equivariant Operators by Means of
   Permutants},
Booktitle = {Machine Learning and Knowledge Extraction, CD-MAKE 2018},
Series = {Lecture Notes in Computer Science},
Year = {2018},
Volume = {11015},
Pages = {265-272},
DOI = {10.1007/978-3-319-99740-7\_18},
ISSN = {0302-9743},
EISSN = {1611-3349},
ISBN = {978-3-319-99740-7},
Keywords = {Natural pseudo-distance; Filtering function; Group action; Group
   equivariant non-expansive operator; Persistent homology group;
   Topological data analysis},
Keywords-Plus = {PERSISTENT HOMOLOGY; INVARIANCE},
Affiliations = {University of Bologna},
ResearcherID-Numbers = {FROSINI, PATRIZIO/AAF-3482-2020},
Unique-ID = {WOS:000456820700018},
OA = {Green Published, Green Submitted},
}

@inproceedings{remalg2017,
Author = {Frosini, Patrizio and Quercioli, Nicola},
Title = {Some Remarks on the Algebraic Properties of Group Invariant Operators in Persistent Homology},
Booktitle = {Machine Learning and Knowledge Extraction, CD-MAKE 2017},
Series = {Lecture Notes in Computer Science},
Year = {2017},
Volume = {10410},
Pages = {14-24},
DOI = {10.1007/978-3-319-66808-6\_2},
ISSN = {0302-9743},
EISSN = {1611-3349},
ISBN = {978-3-319-66808-6},
Keywords = {Natural pseudo-distance; Filtering function; Group action; Group
   invariant non-expansive operator; Persistent homology group; Topological
   data analysis},
Affiliations = {University of Bologna},
ResearcherID-Numbers = {FROSINI, PATRIZIO/AAF-3482-2020},
Unique-ID = {WOS:000455398500002},
OA = {Green Submitted, Green Published},
}

@article{kgnn2019, 
title={Weisfeiler and Leman Go Neural: Higher-Order Graph Neural Networks}, 
volume={33}, 
url={https://ojs.aaai.org/index.php/AAAI/article/view/4384}, 
DOI={10.1609/aaai.v33i01.33014602}, 
abstractNote={&lt;p&gt;In recent years, graph neural networks (GNNs) have emerged as a powerful neural architecture to learn vector representations of nodes and graphs in a supervised, end-to-end fashion. Up to now, GNNs have only been evaluated empirically—showing promising results. The following work investigates GNNs from a theoretical point of view and relates them to the 1-dimensional Weisfeiler-Leman graph isomorphism heuristic (1-WL). We show that GNNs have the same expressiveness as the 1-WL in terms of distinguishing non-isomorphic (sub-)graphs. Hence, both algorithms also have the same shortcomings. Based on this, we propose a generalization of GNNs, so-called &lt;em&gt;k&lt;/em&gt;-dimensional GNNs (&lt;em&gt;k&lt;/em&gt;-GNNs), which can take higher-order graph structures at multiple scales into account. These higher-order structures play an essential role in the characterization of social networks and molecule graphs. Our experimental evaluation confirms our theoretical findings as well as confirms that higher-order information is useful in the task of graph classification and regression.&lt;/p&gt;}, 
number={01}, 
journal={Proceedings of the AAAI Conference on Artificial Intelligence}, 
author={Morris, Christopher and Ritzert, Martin and Fey, Matthias and Hamilton, William L. and Lenssen, Jan Eric and Rattan, Gaurav and Grohe, Martin}, 
year={2019}, 
month={7}, 
pages={4602-4609} }

@article{satosurvey2020,
      title={A Survey on The Expressive Power of Graph Neural Networks}, 
      author={Ryoma Sato},
      year={2020},
      archivePrefix={arXiv},
      primaryClass={cs.LG},
      journal={preprint at arXiv},
      doi = {10.48550/arXiv.2003.04078}
}

@inproceedings{wlgnneq2019,
author = {Chen, Zhengdao and Villar, Soledad and Chen, Lei and Bruna, Joan},
title = {On the equivalence between graph isomorphism testing and function approximation with GNNs},
year = {2019},
abstract = {Graph neural networks (GNNs) have achieved lots of success on graph-structured data. In light of this, there has been increasing interest in studying their representation power. One line of work focuses on the universal approximation of permutation-invariant functions by certain classes of GNNs, and another demonstrates the limitation of GNNs via graph isomorphism tests.Our work connects these two perspectives and proves their equivalence. We further develop a framework of the representation power of GNNs with the language of sigma-algebra, which incorporates both viewpoints. Using this framework, we compare the expressive power of different classes of GNNs as well as other methods on graphs. In particular, we prove that order-2 Graph G-invariant networks fail to distinguish non-isomorphic regular graphs with the same degree. We then extend them to a new architecture, Ring-GNN, which succeeds in distinguishing these graphs as well as for tasks on real-world datasets.},
booktitle = {Proceedings of the 33rd International Conference on Neural Information Processing Systems},
articleno = {1424},
numpages = {9}
}

@INPROCEEDINGS{gnn2005,
  author={Gori, M. and Monfardini, G. and Scarselli, F.},
  booktitle={Proceedings. 2005 IEEE International Joint Conference on Neural Networks, 2005.}, 
  title={A new model for learning in graph domains}, 
  year={2005},
  volume={2},
  number={},
  pages={729-734},
  keywords={Neural networks;Focusing;Application software;Machine learning;Recurrent neural networks;Encoding;Data structures;Machine learning algorithms;Tree graphs;Software engineering},
  doi={10.1109/IJCNN.2005.1555942}
}

@inproceedings{powgnn2019,
title={How Powerful are Graph Neural Networks?},
author={Keyulu Xu and Weihua Hu and Jure Leskovec and Stefanie Jegelka},
booktitle={International Conference on Learning Representations},
year={2019},
url={https://openreview.net/forum?id=ryGs6iA5Km},
}

@article{vf22018,
title = {VF2++—An improved subgraph isomorphism algorithm},
journal = {Discrete Applied Mathematics},
volume = {242},
pages = {69-81},
year = {2018},
issn = {0166-218X},
doi = {10.1016/j.dam.2018.02.018},
author = {Alpár Jüttner and Péter Madarasi},
keywords = {Computational biology, Subgraph isomorphism problem},
abstract = {This paper presents a largely improved version of the VF2 algorithm for the Subgraph Isomorphism Problem. The improvements are twofold. Firstly, it is based on a new approach for determining the matching order of the nodes, and secondly, more efficient —nevertheless easier to compute —cutting rules are proposed. They together reduce the search space significantly. In addition to the usual Subgraph Isomorphism Problem, the paper also presents specialized algorithms for the Induced Subgraph Isomorphism and for the Graph Isomorphism Problems. Finally, an extensive experimental evaluation is provided using a wide range of inputs, including both real-life biological and chemical datasets and standard randomly generated graph series. The results show major and consistent running time improvements over the other known methods. The C++ implementations of the algorithms are available open-source as part of the LEMON graph and network optimization library.}
}

@inproceedings{networkx2008,
  author =       {Aric A. Hagberg and Daniel A. Schult and Pieter J. Swart},
  title =        {Exploring Network Structure, Dynamics, and Function using NetworkX},
  booktitle =   {Proceedings of the 7th Python in Science Conference},
  pages =     {11 - 15},
  year =      {2008},
}

@article{boundval1982,
	title = {Isomorphism of graphs of bounded valence can be tested in polynomial time},
	journal = {Journal of Computer and System Sciences},
	volume = {25},
	number = {1},
	pages = {42-65},
	year = {1982},
	issn = {0022-0000},
	doi = {10.1016/0022-0000(82)90009-5},
	author = {Eugene M. Luks},
	abstract = {Suppose we are given a set of generators for a group G of permutations of a colored set A. The color automorphism problem for G involves finding generators for the subgroup of G which stabilizes the color classes. It is shown that testing isomorphism of graphs of bounded valance is polynomial-time reducible to the color automorphism problem for groups with composition factors of bounded order. The algorithm for the latter problem involves three divide-and-conquer maneuvers. The problem is solved sequentially on the G-orbits. An orbit is broken into a minimal system of imprimitivity blocks. At that point, the hypothesis on G guarantees the presence of a subgroup P of “small” index which acts a p-group on the blocks. Divide-and-conquer is then used on the group, trading the problem on G for a small number of similar problems on P. In the trivalent case p = 2, P = G and the analysis requires only elementary notions. For higher valence, the justification requires some new observations about primitive permutation groups.}
}

@ARTICLE{n3logn1987,
	author = {Galil, Zvi and Hoffmann, Christoph M. and Luks, Eugene M. and Schnorr, Claus P. and Weber, Andreas},
	title = {O(n**3log n) deterministic and an O(n**3) Las Vegas Isomorphism test for trivalent graphs.},
	year = {1987},
	journal = {Journal of the ACM},
	volume = {34},
	number = {3},
	pages = {5l3–53l},
	type = {Article},
}

@article{pract2014,
	title = {Practical graph isomorphism, II},
	journal = {Journal of Symbolic Computation},
	volume = {60},
	pages = {94-112},
	year = {2014},
	issn = {0747-7171},
	doi = {10.1016/j.jsc.2013.09.003},
	author = {Brendan D. McKay and Adolfo Piperno},
	keywords = {Graph isomorphism, Canonical labelling, Nauty, Traces, Partition refinement},
	abstract = {We report the current state of the graph isomorphism problem from the practical point of view. After describing the general principles of the refinement-individualization paradigm and proving its validity, we explain how it is implemented in several of the key implementations. In particular, we bring the description of the best known program nauty up to date and describe an innovative approach called Traces that outperforms the competitors for many difficult graph classes. Detailed comparisons against saucy, Bliss and conauto are presented.}
}

\end{document}